%% 
%% Copyright 2019-2020 Elsevier Ltd
%% 
%% This file is part of the 'CAS Bundle'.
%% ---------------------------------

%% It may be distributed under the conditions of the LaTeX Project Public
%% License, either version 1.2 of this license or (at your option) any
%% later version.  The latest version of this license is in
%%    http://www.latex-project.org/lppl.txt
%% and version 1.2 or later is part of all distributions of LaTeX
%% version 1999/12/01 or later.
%% 
%% The list of all files belonging to the 'CAS Bundle' is
%% given in the file `manifest.txt'.
%% 
%% Template article for cas-sc documentclass for 
%% double column output.

%\documentclass[a4paper,fleqn,longmktitle]{cas-sc}
\documentclass[a4paper,fleqn]{cas-sc} 
% Sc for single col and dc for double col
\usepackage{calrsfs}
%\ExplSyntaxOn \cs_gset:Npn \__first_footerline: { \group_begin: \small \sffamily \__short_authors: \group_end: } \ExplSyntaxOff

\usepackage[numbers]{natbib}

\usepackage{balance}
\usepackage{xcolor}
\usepackage{soul}
\usepackage{tikz}
\usepackage{tabularray}
\usepackage{amsmath,amssymb,amsfonts}
\usepackage{algorithmicx}
\usepackage{color}
\usepackage{array}
\usepackage{longtable}
\usepackage{rotating}
\usepackage{url}
\usepackage{hyperref}
\usepackage[algo2e]{algorithm2e}   % uses algorithm2e for page breaks
\usepackage{algpseudocode}

% make For/If print explicit end-lines
\SetKwFor{For}{for}{do}{end}
\SetKwIF{If}{ElseIf}{Else}{if}{then}{else if}{else}{end}
\SetKwFor{While}{while}{do}{end}

% match spacing/indent so "end" sits on its own line with normal gaps
\SetAlgoInsideSkip{smallskip}   % space between lines/blocks inside the algo
\SetInd{1em}{0.8em}             % left indent, indentation of nested blocks

%%%Author definitions
\def\tsc#1{\csdef{#1}{\textsc{\lowercase{#1}}\xspace}}
\tsc{WGM}
\tsc{QE}
\tsc{EP}
\tsc{PMS}
\tsc{BEC}
\tsc{DE}
%%%

\usepackage{framed} % Framing content
\usepackage{multicol} % Multiple columns environment
\usepackage{nomencl} % Nomenclature package

\usepackage{hyperref}
\hypersetup{
    colorlinks=true,
    linkcolor=blue,
    filecolor=magenta,      
    urlcolor=cyan,
    pdftitle={Overleaf Example},
    pdfpagemode=FullScreen,
    }
\urlstyle{same}
\usepackage{url}
\usepackage{fullpage}
\usepackage{times}
\usepackage{fancyhdr,graphicx,amsmath,amssymb}
\usepackage{lineno,hyperref}
\usepackage{subcaption}
\usepackage{graphicx}
\usepackage{wrapfig}
\usepackage{multirow}
\usepackage{amsmath}
\usepackage{amsfonts,amsthm, bm}
\usepackage{amssymb}
\usepackage{calrsfs}
\usepackage{rotating}
\usepackage{pdflscape}
\usepackage{lscape}
\usepackage{xcolor}
\usepackage{float}
\usepackage{booktabs}
\usepackage{longtable}
\usepackage{multirow}
\usepackage{xltabular}
\usepackage{caption}
\usepackage{algorithm}
\usepackage{algpseudocode}
\usepackage{lineno}  % For custom line numbering
\usepackage[table]{xcolor}
\definecolor{mygreen}{RGB}{0,100,0}

% \usepackage[table,xcdraw]{xcolor}
% \usepackage{caption} % For advanced caption formatting
% \captionsetup[table]{justification=centering, singlelinecheck=true} % Center captions and ensure single-line format

% \sloppy
% \usepackage{soul}
% \usepackage{array} 
% \newcolumntype{L}[1]{>{\raggedright\let\newline\\\arraybackslash\hspace{0pt}}m{#1}}
% \newcolumntype{C}[1]{>{\centering\let\newline\\\arraybackslash\hspace{0pt}}m{#1}}
% \newcolumntype{R}[1]{>{\raggedleft\let\newline\\\arraybackslash\hspace{0pt}}m{#1}}
\usepackage{placeins} % Use \FloatBarrier if needed

\usepackage{float}
\usepackage{nccmath}
 \newtheorem{theorem}{Theorem}
 
 \newtheorem{corollary}[theorem]{Corollary}
 %\newdefinition{rmk}{Remark}
 % \newtheorem{proof}{Proof}
 %\newproof{pot}{Proof of Theorem \ref{thm2}}

    %\useRomanappendicesfalse

    % \journal{Journal of Energy Storage}

    \usepackage{ifthen}
    \renewcommand{\nomgroup}[1]{%
    	\ifthenelse{\equal{#1}{C}}{\item[\textbf{List of Abbreviations}]}{%
    		\ifthenelse{\equal{#1}{V}}{\item[\textbf{Indices}]}{%  % Variables
    			\ifthenelse{\equal{#1}{S}}{\item[\textbf{Variables}]}{}}} % Indices
    }

\makenomenclature

\setlength{\nomitemsep}{-\parskip} % Baseline skip between items

\renewcommand*\nompreamble{\begin{multicols}{2}}

\renewcommand*\nompostamble{\end{multicols}}

% \newcommand\nnfootnote[1]{%
%   \begin{NoHyper}
%   \renewcommand\thefootnote{}\footnote{#1}%
%   \addtocounter{footnote}{-1}%
%   \end{NoHyper}
% }

% Uncomment and use as if needed
%\newtheorem{theorem}{Theorem}
%\newtheorem{lemma}[theorem]{Lemma}
%\newdefinition{rmk}{Remark}
%\newproof{pf}{Proof}
%\newproof{pot}{Proof of Theorem \ref{thm}}

\begin{document}

\let\WriteBookmarks\relax
\def\floatpagepagefraction{1}
\def\textpagefraction{.001}

% % Short title
% \shorttitle{Leveraging social media news}

% Short author
\shortauthors{Quamar et~al.}

% Main title of the paper
\title [mode = title]{A Novel MDP Decomposition Framework for Scalable UAV Mission Planning in Complex and Uncertain Environments}
% old: Sub-MDP Decomposition for Computationally Efficient UAV Mission Planning in Complex Environments
% % Title footnote mark
% % eg: \tnotemark[1]
% \tnotemark[1,2]

% % Title footnote 1.
% % eg: \tnotetext[1]{Title footnote text}
% % \tnotetext[<tnote number>]{<tnote text>} 
% \tnotetext[1]{This document is the results of the research
%    project funded by the National Science Foundation.}

% \tnotetext[2]{The second title footnote which is a longer text matter
%    to fill through the whole text width and overflow into
%    another line in the footnotes area of the first page.}

% First author
%
% Options: Use if required
% eg: \author[1,3]{Author Name}[type=editor,
%       style=chinese,
%       auid=000,
%       bioid=1,
%       prefix=Sir,
%       orcid=0000-0000-0000-0000,
%       facebook=<facebook id>,
%       twitter=<twitter id>,
%       linkedin=<linkedin id>,
%       gplus=<gplus id>]
\author[1, 3]{Md Muzakkir Quamar}[type=editor,
                        auid=000,bioid=1,
                        orcid=0009-0003-5550-9760]

% Corresponding author indication
\cormark[1]

% Footnote of the first author
\fnmark[1]

% Email id of the first author
\ead{g201806920@kfupm.edu.sa}

% % URL of the first author
% \ead[url]{www.cvr.cc, cvr@sayahna.org}

%  Credit authorship
\credit{Conceptualization of this study, Methodology, Software}

% Address/affiliation
\affiliation[1]{organization={Control and Instrumentation Department},
    addressline={King Fahd University of Petroleum and Minerals}, 
    city={Dhahran},
    % citysep={}, % Uncomment if no comma needed between city and postcode
    postcode={31261}, 
    % state={},
    country={KSA}}

% Second author
\author[1,2]{Ali Nasir}[style=chinese]
% Address/affiliation
\affiliation[2]{organization={Interdisciplinary Research Center for Intelligent
Manufacturing and Robotics},
    addressline={KFUPM}, 
    city={Dhahran},
    % citysep={}, % Uncomment if no comma needed between city and postcode
    postcode={31261}, 
    % state={},
    country={KSA}}

% Third author
\author[1,3]{Sami ELFerik}[style=chinese]
% Address/affiliation
\affiliation[3]{organization={Interdisciplinary Research Center for Smart Mobility and Logistics},
    addressline={KFUPM}, 
    city={Dhahran},
    % citysep={}, % Uncomment if no comma needed between city and postcode
    postcode={31261}, 
    % state={},
    country={KSA}}

% % Third author
% \author[2,3]{CV Rajagopal}[%
%    role=Co-ordinator,
%    suffix=Jr,
%    ]
% \fnmark[2]
% \ead{cvr3@sayahna.org}
% \ead[URL]{www.sayahna.org}

\credit{Data curation, Writing - Original draft preparation}

% % Address/affiliation
% \affiliation[2]{organization={Sayahna Foundation},
%     % addressline={}, 
%     city={Jagathy},
%     % citysep={}, % Uncomment if no comma needed between city and postcode
%     postcode={695014}, 
%     state={Trivandrum},
%     country={India}}

% % Fourth author
% \author%
% [1,3]
% {Rishi T.}
% \cormark[2]
% \fnmark[1,3]
% \ead{rishi@stmdocs.in}
% \ead[URL]{www.stmdocs.in}

% \affiliation[3]{organization={STM Document Engineering Pvt Ltd.},
%     addressline={Mepukada}, 
%     city={Malayinkil},
%     % citysep={}, % Uncomment if no comma needed between city and postcode
%     postcode={695571}, 
%     state={Trivandrum},
%     country={India}}

% Corresponding author text
% \cortext[cor1]{Corresponding author}
% \cortext[cor2]{Principal corresponding author}

% % Footnote text
% \fntext[fn1]{This is the first author footnote. but is common to third
%   author as well.}
% \fntext[fn2]{Another author footnote, this is a very long footnote and
%   it should be a really long footnote. But this footnote is not yet
%   sufficiently long enough to make two lines of footnote text.}

% % For a title note without a number/mark
% \nonumnote{This note has no numbers. In this work we demonstrate $a_b$
%   the formation Y\_1 of a new type of polariton on the interface
%   between a cuprous oxide slab and a polystyrene micro-sphere placed
%   on the slab.
%   }

% Here goes the abstract

\begin{abstract}
This paper presents a scalable and fault-tolerant framework for unmanned aerial vehicle (UAV) mission management in complex and uncertain environments. The proposed approach addresses the computational bottleneck inherent in solving large-scale Markov Decision Processes (MDPs) by introducing a two-stage decomposition strategy. In the first stage, a factor-based algorithm partitions the global MDP into smaller, goal-specific sub-MDPs by leveraging domain-specific features such as goal priority, fault states, spatial layout, and energy constraints. In the second stage, a priority-based recombination algorithm solves each sub-MDP independently and integrates the results into a unified global policy using a meta-policy for conflict resolution. Importantly, we present a theoretical analysis showing that, under mild probabilistic independence assumptions, the combined policy is provably equivalent to the optimal global MDP policy. Our work advances artificial intelligence (AI) decision scalability by decomposing large MDPs into tractable subproblems with provable global equivalence. The proposed decomposition framework enhances the scalability of Markov Decision Processes, a cornerstone of sequential decision-making in artificial intelligence, enabling real-time policy updates for complex mission environments. Extensive simulations validate the effectiveness of our method, demonstrating orders-of-magnitude reduction in computation time without sacrificing mission reliability or policy optimality. The proposed framework establishes a practical and robust foundation for scalable decision-making in real-time UAV mission execution.
\end{abstract}

% Keywords
% Each keyword is seperated by \sep
\begin{keywords}
Markov Decision Process \sep Decomposed Markov Decision Process \sep Fault tolerant operation \sep Decision Making \sep Unmanned Aerial vehicle \sep UAV Mission Planning \sep UAV Mission Management
\end{keywords}

\maketitle

\section{Introduction}
Unmanned Aerial Vehicles (UAVs) are increasingly becoming indispensable in critical civilian and military domains where efficiency, responsiveness, and adaptability are of utmost importance. Their unique capabilities—such as compact size, low operational cost, vertical takeoff and landing, high maneuverability, rapid deployment, and extended endurance—make them an attractive alternative to traditional aerial platforms. As a result, UAVs are now widely employed in diverse applications ranging from logistics, surveillance, and environmental monitoring to reconnaissance and tactical strike operations. However, as UAV missions grow in complexity and scale, particularly in multi-UAV scenarios, effective mission management becomes a central challenge \cite{rodrigues2025multi}. 

A central challenge in UAV mission planning lies in maximizing operational effectiveness while accounting for critical constraints such as limited resources, mission completion requirements, and platform survivability. Achieving optimal decision-making in such contexts requires carefully balancing competing factors—mission objectives, resource utilization, and UAV endurance \cite{nasir2024fault}. For example, in logistics operations, UAVs need to optimize delivery routes to minimize travel time and energy expenditure while adapting to dynamic urban environments with obstacles and airspace restrictions \cite{liu2022two, baraean2023optimal}. In air combat missions, UAVs must simultaneously execute complex tactical maneuvers, avoid detection or interception, and manage onboard resources to sustain prolonged engagement \cite{quamarMDP}. Similarly,  In disaster response scenarios, UAVs are often required to collect real-time situational data under rapidly evolving conditions, demanding robust trade-offs between communication bandwidth, flight endurance, and coverage reliability \cite{khan2022cooperative}. These examples highlight that effective UAV mission planning is not simply about task execution but about maintaining a continuous balance among multiple, often conflicting, objectives under uncertainty.

Coordinating decision-making under uncertainty, ensuring efficient resource allocation, and maintaining robustness in dynamic and adversarial environments call for advanced planning frameworks that can effectively balance computational efficiency with operational reliability \cite{atyabi2018}. Optimizing UAV performance during missions is inherently a multifaceted problem, encompassing not only navigation and obstacle avoidance but also endurance management, fault tolerance, and adaptability to unpredictable events \cite{quamar2025fault}. Over the years, extensive research has focused on developing optimal path-planning and navigation strategies for UAVs \cite{saeed2022,quamarSSD}, with a variety of optimization approaches proposed to minimize energy consumption and enhance mission efficiency \cite{nasir2024secure,saeed2023energy}. Techniques such as geometric and graph-based methods \cite{radhakrishnan}, artificial potential fields \cite{quamarSIED}, game-theoretic formulations \cite{van2022game}, and sensor fusion strategies have been widely studied and applied for UAV navigation and obstacle avoidance \cite{ali2019path}. While these contributions represent significant progress, most existing frameworks tend to address isolated aspects of the problem space. They often emphasize path optimization or collision avoidance without sufficiently integrating the broader requirements of mission success, resource-aware decision-making, and resilience to real-world challenges such as environmental uncertainties, system malfunctions, or hostile threats. This gap underscores the need for unified, decision-theoretic approaches that can holistically capture the interplay between mission objectives, resource constraints, and dynamic operational risks.

Recent studies emphasize that advancing UAV autonomy requires decision-making frameworks capable of adapting to environmental variability while maintaining mission reliability. For instance, Basharat et al. \cite{basharat2022} provide a detailed survey underscoring the importance of algorithms that allow UAVs to respond to dynamic operational conditions without compromising system integrity or mission success. In parallel, the work in \cite{steen2024military} draws attention to the role of resource management, highlighting how effective allocation of limited onboard resources directly impacts UAV endurance, resilience, and long-term reliability. Another complementary line of research explores the use of queuing-theoretic models for mission task management, where tasks are mapped to service queues in order to support efficient scheduling, prioritization, and execution under uncertainty \cite{abir2023software,khabbaz2019modeling}. Together, these contributions reveal that autonomy in UAV systems hinges not only on navigation and control, but also on the seamless integration of adaptability, resource efficiency, and mission-level coordination. A system that can manage this interplay would be capable of handling more sophisticated operations—such as cooperative multi-UAV missions, persistent surveillance, or disaster-response deployments—with minimal human oversight. Such advancements would not only raise mission success rates but also enhance safety, resilience, and sustainability, even in resource-constrained or adversarial settings. In light of these challenges, Markov Decision Processes have gained prominence as a powerful decision-theoretic tool, offering a principled means to model sequential choices under uncertainty and optimize UAV performance across diverse operational scenarios.

MDPs have long been recognized as a rigorous framework for sequential decision-making in uncertain environments, offering a principled way to optimize system behavior under stochastic dynamics \cite{nasir2024optimized}. The strength of MDPs lies in their ability to represent complex systems through a set of discrete states and possible actions, where transitions between states are governed by probabilistic models. At each decision point, the agent selects an action, transitions probabilistically to a subsequent state, and incurs an associated cost. By aggregating these stepwise costs over the planning horizon, MDPs enable the derivation of policies that minimize the expected cumulative cost, thereby ensuring efficient and reliable mission performance under uncertainty—whether the horizon is finite or infinite \cite{hamadouche2020reward}. This capability has led to their widespread application in UAV operations, particularly for tasks such as navigation in uncertain terrains, real-time obstacle avoidance, and adaptive threat evasion \cite{ragi2013uav}. Nevertheless, most existing UAV-focused applications treat these problems in isolation and do not fully address the broader requirement of balancing mission success with resource management and operational risks. In practice, UAVs must contend with simultaneous challenges—including limited battery life, potential hardware faults, communication delays, and exposure to adversarial threats—where trade-offs between competing objectives are unavoidable. Despite the suitability of MDPs for such multi-faceted problems, the literature still lacks integrated frameworks that jointly consider mission goals, resource allocation, and resilience within a unified formulation. This gap highlights the need for innovative approaches that extend conventional MDP methods toward more scalable, resource-aware, and risk-sensitive mission planning solutions

To address this gap, in our earlier work \cite{quamar2025fault} we developed a comprehensive MDP-based framework for UAV mission management that explicitly integrated fault tolerance, energy constraints, and threat evasion into a unified decision-making model . The framework accounted for post-fault UAV capabilities, battery state-of-charge, repair and recharge opportunities, and adaptive navigation strategies, thereby enabling UAVs to sustain mission performance even under sensor failures, actuator faults, or adversarial conditions. A cost-based policy optimization was employed to minimize expected mission costs while ensuring resilience and operational safety in threat-prone environments . Simulation case studies demonstrated that the framework could dynamically reallocate goals, mitigate random threats, and preserve UAV health, confirming its robustness and practical viability . However, a critical limitation observed in this study was scalability: as the number of mission goals or grid locations increased, the state space grew exponentially, leading to prohibitively high computational demands. This “state explosion” effect made solving large-scale mission scenarios computationally expensive and impractical for real-time operations. These challenges necessitate new approaches, such as sub-MDP decomposition or hierarchical MDP models, to partition the problem into smaller, more tractable subspaces while preserving decision quality.

To overcome the computational challenges posed by large-scale MDPs, we build on \cite{quamar2025fault} and propose two decomposition algorithms specifically designed for UAV mission planning. From an AI perspective, the proposed decomposition constitutes a scalable decision-making paradigm that directly addresses the “curse of dimensionality” in sequential decision-making, a key bottleneck in reinforcement learning and stochastic planning. The first algorithm partitions the global MDP into smaller, more manageable sub-MDPs based on decomposition criteria such as mission goals, geographic regions, fault states, or hybrid combinations. By reducing the state space of each subproblem, this approach significantly improves tractability without compromising decision quality. The second algorithm evaluates these sub-MDPs by computing priority scores using local solvers, such as value iteration, and dynamically assigns UAV agents to the sub-MDPs according to their relative importance. The individual results are then combined to construct a joint global policy, ensuring coherent mission-level coordination while maintaining computational efficiency. This framework thus contributes not only to engineering autonomy but also to fundamental AI scalability.

The contributions of this study are multifold. \textbf{First}, it is the only approach to incorporate post-fault UAV capabilities within a decomposed MDP framework, thereby allowing robust policy adaptation under actuator or sensor failures. \textbf{Second}, the framework explicitly integrates the UAV’s state-of-charge to facilitate range-aware task assignment, which not only prevents deep discharge but also promotes energy-efficient operation. \textbf{Third}, repair and recharge options are modeled to extend UAV endurance and operational resilience. \textbf{Fourth}, the framework addresses real-world complexities by incorporating recurring goals and dynamically generated requests, enhancing reliability under uncertain conditions. \textbf{Fifth}, a detailed scalability analysis is performed to highlight the exponential complexity of large MDP models. \textbf{Sixth}, we develop two novel algorithms—one for decomposition of the global MDP into goal-relevant sub-MDPs, and another for recombination using priority-based meta-policy coordination—to solve the global MDP problem in a more computationally and cost-effective manner. \textbf{Seventh}, we present a formal theoretical proof establishing that, under certain independence assumptions, the recombined policy from sub-MDPs is provably equivalent to the global MDP policy. \textbf{Finally}, we extend and improve upon our earlier work~\cite{quamar2025fault}, to design a scalable, resource-aware, and fault-tolerant UAV mission management framework.
A high-level architecture of the proposed model and its setup is shown in Figure~\ref{BD_PF}.

% The contributions of this study are multifold. \textbf{First}, it is the only approach to incorporate post-fault UAV capabilities within a decomposed MDP framework, thereby allowing robust policy adaptation under actuator or sensor failures. \textbf{Second}, the framework explicitly integrates the UAV’s state-of-charge to facilitate range-aware task assignment, which not only prevents deep discharge but also promotes energy-efficient operation. \textbf{Third}, repair and recharge options are modeled to extend UAV endurance and operational resilience. \textbf{Fourth}, the framework addresses real-world complexities by incorporating recurring goals and dynamically generated requests, enhancing reliability under uncertain conditions. \textbf{Fifth},  a detailed scalability analysis is performed to highlight the exponential complexity of large MDP models. \textbf{Sixth}, develop two algorithms for decomposition of global MDP into sub-MDPs and then a recombination algorithm to solve the global MDP problem in a more computationally and cost effective manner. \textbf{Finally}, we extend and improve upon our earlier work \cite{quamar2025fault}, to design a scalable, resource-aware, and fault-tolerant UAV mission management framework. A high level architecture of the proposed model and its setup is shown in Figure~\ref{BD_PF}.

%  %  Figure 1 
\begin{figure} 
\centering
\includegraphics[width=\textwidth,keepaspectratio]{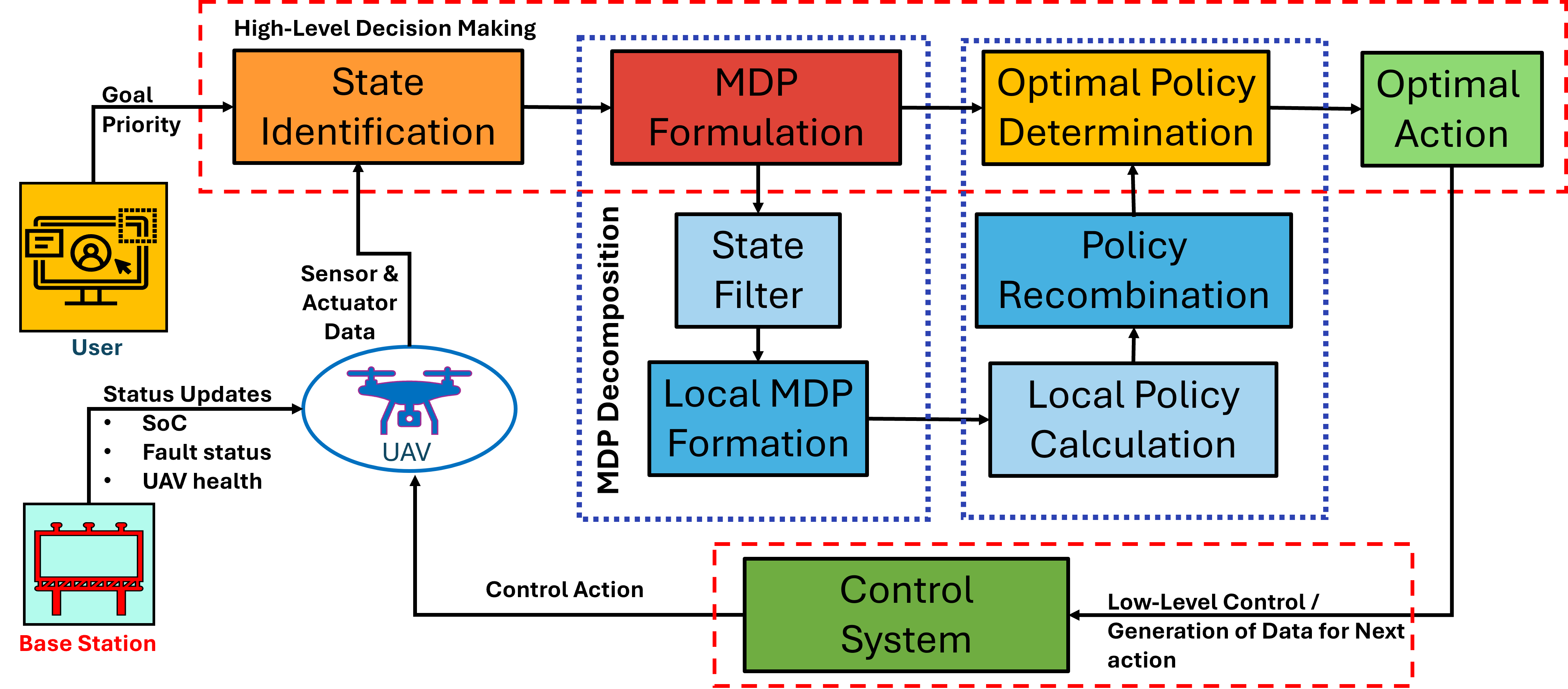}
\caption{Block diagram architecture of the proposed MDP model for single UAV operation.}
\label{BD_PF}  
\end{figure}
% %  Figure 1 end

%  %  Figure 1 
% \begin{figure*}[ht] 
% \centering
% {\includegraphics[width= 13 cm]{}}
% \caption{Schematic overview of the problem formulation and proposed solution using MDP.}
% \label{GA_2}  
% \end{figure*}
% %  Figure 1 end

Building on these contributions, the framework employs stochastic dynamic programming to optimize dynamic task assignments, providing a systematic approach for decision-making under uncertainty and rapidly changing mission conditions. Since the optimal policies are computed offline, the framework minimizes computational overhead during mission execution, enabling fast responses in real time. Together, these features establish a practical and scalable solution for UAV resource optimization and mission success, ensuring reliability and robustness even in complex, uncertain, and adversarial environments.

\FloatBarrier
\section{Background and Problem Formulation}

This section outlines the necessary preliminaries for the problem under investigation. We begin with the classification of UAV fault modes, followed by the modeling of energy consumption and operational range. Next, we describe the UAV’s agility in the presence of threats, and finally, we summarize the fundamentals of Markov Decision Processes (MDPs), which form the basis of our formulation.

\FloatBarrier
\subsection{Fault Identification and Classification}
We model the UAV as a continuous‐time nonlinear system,
\[
\dot x = f(x) + g(x)\,u,\qquad y = h(x),
\]
where \(x\in\mathbb{R}^n\) is the state, \(u\in\mathbb{R}^m\) the control input, and \(y\in\mathbb{R}^p\) the sensor output.  Under nominal conditions, the vehicle is both controllable and observable.  Actuator or sensor failures—and payload camera malfunctions—can compromise controllability and/or observability.  We adopt the eight discrete fault modes defined in our previous work \cite{quamar2025fault}, which form the basis of our fault‐tolerant MDP model; the detailed classification is discussed in Table- 1 of \cite{quamar2025fault}.

\FloatBarrier
\subsection{Power Consumption and Range}
Battery capacity fundamentally limits UAV endurance (typically 60–90 min).  Let \(\mathrm{SOC}\in[0,1]\) be the normalized state‐of‐charge, \(b_c\) the battery capacity, \(v\) the terminal voltage, and \(P_{\rm UAV}\) the instantaneous power draw.  The flight time satisfies
\[
\tau \le \frac{\mathrm{SOC}\,b_c\,v}{P_{\rm UAV}},
\]
so that at full charge \(\tau_{\max}=\tfrac{b_c\,v}{P_{\rm UAV}}\).  Since \(P_{\rm UAV}\) varies with maneuvering, we approximate the achievable range
\[
\sigma = \int_0^{\tau}\beta(z)\,dz \;-\; k_1\,q \;-\; k_2\,\varrho,
\]
where \(\beta(z)\) is the speed profile, \(q\) the worst‐case wind resistance, \(\varrho\) the payload mass, and \(k_1,k_2\) empirical constants.  A flag \(r_\alpha=1\) indicates that task \(\alpha\) at distance \(d_\alpha\) is within range (\(\sigma\ge d_\alpha\)).

\subsection{Threat Evasion and Agility}
We categorize the environment into three threat levels \(t\in\{0,1,2\}\) (no, low, high) and allow the UAV to switch between
\[
m = 
\begin{cases}
0, &\text{normal navigation (energy‐efficient)};\\
1, &\text{high‐speed evasion (risk‐mitigation).}
\end{cases}
\]
This dynamic mode switching balances energy usage against safety requirements in the presence of adversarial or environmental hazards.

% Note: Detailed fault‐mode definitions are in \cite{quamar2025fault}.

\FloatBarrier
\subsection{Preliminary Discussion on MDP}

Decision-making under uncertainty is a central challenge in UAV mission planning, particularly in environments characterized by dynamic events, faults, and limited resources. MDPs provide a mathematically rigorous framework to address this challenge by modeling sequential decisions in stochastic settings \cite{nasir2018markov}. An MDP formalizes the interaction between the UAV (agent) and its environment, enabling the design of policies that minimize long-term operational costs while accounting for uncertainty in system dynamics.

Formally, an MDP is represented as a tuple:
\[
MDP \; \rightarrow \; \{\mathcal{S}, \mathcal{D}, \mathcal{T}, \mathcal{J}, \gamma\}
\]
where the elements are defined as follows:
\begin{itemize}
    \item $\mathcal{S}$: a finite set of states, $s \in \{s_1, s_2, \ldots, s_n\}$, describing the UAV and environment (e.g., location, battery status, fault condition, threat level).
    \item $\mathcal{D}$: a set of available decisions or actions, $d \in \{d_1, d_2, \ldots, d_m\}$, such as moving to a waypoint, recharging, or entering an agile mode.
    \item $\mathcal{T}(s, d, s')$: the transition probability of moving from state $s$ to $s'$ after applying action $d$, where $\mathcal{T}: \mathcal{S} \times \mathcal{D} \times \mathcal{S} \rightarrow [0,1]$.
    \item $\mathcal{J}(s,d)$: the cost function associated with taking action $d$ in state $s$. In this study, costs are preferred over rewards, capturing penalties due to energy consumption, missed tasks, or risk exposure.
    \item $\gamma \in (0,1)$: the discount factor that balances the importance of immediate versus future costs. A larger $\gamma$ emphasizes long-term mission objectives, while a smaller $\gamma$ prioritizes short-term safety and efficiency.
\end{itemize}

 %  Figure 2 
\begin{figure} 
\centering
{\includegraphics[width= 11.5 cm]{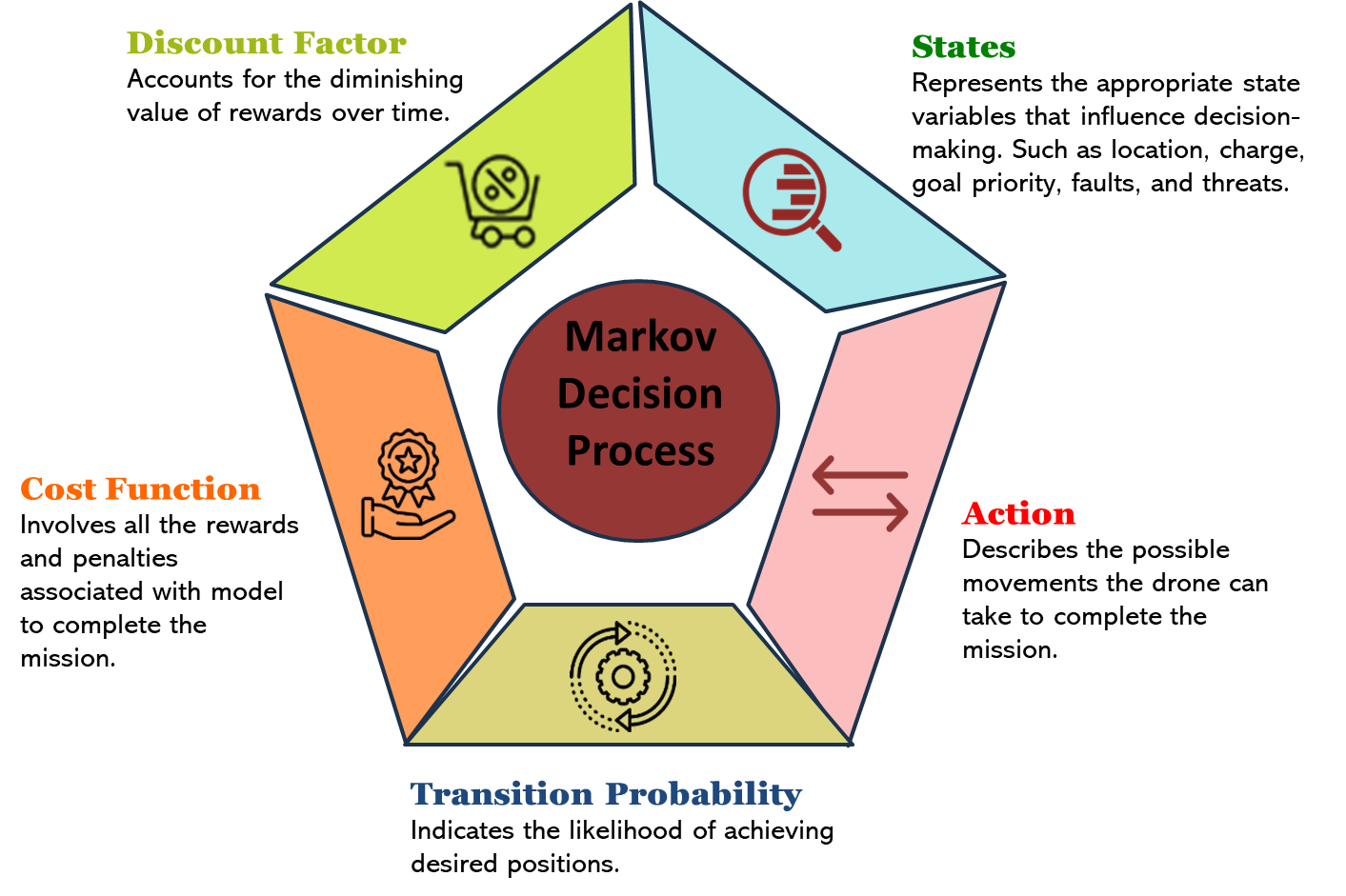}}
\caption{Basic components of an MDP model formulation.}
\label{MDP}  
\end{figure}
%  Figure 2 end

The UAV’s objective is to determine an optimal policy $\pi^*(s)$ that maps each state $s \in \mathcal{S}$ to an action $d \in \mathcal{D}$ such that the expected cumulative cost over the mission horizon is minimized. The value of a state under a policy is iteratively updated using Bellman’s optimality equation:  

\begin{equation}
V^{t+1}(s) = \min_{d \in \mathcal{D}} \Bigg\{ \mathcal{J}(s, d) + \gamma \sum_{s' \in \mathcal{S}} \mathcal{T}(s,d,s') \, V^t(s') \Bigg\}
\end{equation}

where $V^t(s)$ is the value of state $s$ at iteration $t$. The iterative process continues until convergence is achieved, i.e.,  

\[
\|V^{t+1} - V^t\|_\infty < \eta,
\]  

with $\eta$ being a small tolerance threshold. Once convergence is reached, the optimal policy is obtained as:  

\begin{equation}
\pi^*(s) = \arg \min_{d \in \mathcal{D}} \Bigg\{ \mathcal{J}(s, d) + \gamma \sum_{s' \in \mathcal{S}} \mathcal{T}(s,d,s') \, V^*(s') \Bigg\}, \quad \forall s \in \mathcal{S}.
\end{equation}

The choice of solution algorithm depends on the problem scale and constraints. In this work, we adopt \textit{value iteration} due to its simplicity and suitability for problems where the complete model is available. Value iteration progressively estimates the optimal state values and subsequently derives the best policy. Although computationally demanding for very large state spaces, it provides a systematic and guaranteed way to obtain an optimal solution.  

Figure~\ref{MDP} illustrates the essential components of an MDP, highlighting the iterative feedback between the UAV’s state, chosen action, probabilistic transitions, and resulting costs. This abstraction not only captures the uncertainty inherent in UAV missions but also provides a foundation for scalable planning techniques, such as the decomposition methods developed in this study.

\FloatBarrier
\subsection{Problem Setup and Formulation}

In this study, we consider a mission scenario where a multiple UAV operates within a finite geographical area. For modeling purposes, the operational region is partitioned into a set of discrete cells or grids. Although these grids are represented uniformly in the schematic, in practice they may differ in size and shape due to geographical or operational constraints. Within this environment, specific targets or goals are distributed, each corresponding to a mission requirement. Some of these regions may also be designated as \textit{threat zones}, the likelihood of which is modeled through a probability distribution. A ground control station, hereafter referred to as the Base Station (BS), oversees the mission, exchanges information with the UAV, and updates mission priorities dynamically. A high-level block diagram of this setup is shown in Figure~\ref{BD_PF}.

The following assumptions define the scope of the problem:  
\begin{itemize}
    \item The BS is responsible for assigning mission goals and updating their priorities during operation in response to situational changes.  
    \item Controllability and observability analyses for the UAV under possible sensor and actuator faults are performed offline. Hence, the UAV’s ability to operate under different fault conditions is known beforehand.  
    \item The UAV is equipped with a fault-tolerant control architecture, including a switching mechanism that enables adaptation when faults in sensors or actuators are detected.  
    \item The state of charge (SoC) of the UAV’s battery is continuously monitored, enabling the system to evaluate energy feasibility for current and future tasks.  
    \item A payload camera is mounted on the UAV, and its operational status is tracked in real time to ensure availability of sensing capabilities during the mission.  
\end{itemize}

\vspace{0.3cm}

\noindent
Under this setup, the UAV’s primary objective is to accomplish the mission goals while managing its limited resources and adapting to environmental uncertainty. Mission examples include package delivery, aerial surveillance, search-and-rescue operations, and environmental monitoring. The UAV must optimize its actions to reduce energy consumption and mission completion time, while simultaneously addressing unexpected challenges such as:  
\begin{itemize}
    \item occurrence of actuator or sensor faults,  
    \item potential loss of communication,  
    \item randomly appearing threats in certain regions, and  
    \item dynamic re-prioritization of mission goals by the BS.  
\end{itemize}

These complexities highlight the need for a mathematical framework that not only supports sequential decision-making but also explicitly accounts for uncertainty and stochasticity in system evolution.

MDP provide a principled stochastic dynamic programming framework for decision-making under uncertainty \cite{puterman2014markov}. By modeling UAV states, decisions, transition probabilities, and costs, an MDP enables the computation of optimal policies that guide UAV actions across different scenarios. The suitability of MDPs for UAV mission management has been widely recognized, as they support probabilistic reasoning, resource-aware optimization, and adaptation to unpredictable events.

The advantages of adopting an MDP formulation in this context are threefold \cite{aldurgam2021mdp,hamid2021reinforcement}:  
\begin{enumerate}
    \item \textbf{Planning Horizons:} MDPs allow derivation of optimal policies for both finite and infinite horizons, with the latter being particularly useful for long-duration or stationary mission models.  
    \item \textbf{Flexible Cost Structures:} The cost function can incorporate diverse operational penalties, such as energy usage, mission delays, or safety risks, depending on the UAV’s current and subsequent states.  
    \item \textbf{Proven Practicality:} MDP-based solutions have been successfully applied across a wide range of real-world stochastic decision-making problems, validating their relevance to UAV applications.  
\end{enumerate}

\noindent
To exploit these capabilities, the UAV mission problem considered here is formulated as an MDP, enabling systematic handling of uncertainty, faults, and dynamic conditions. The next section presents the detailed mathematical modeling of this formulation and the proposed solution framework.

%  %  Figure 2 
% \begin{figure*}[ht] 
% \centering
% {\includegraphics[width= 11.5 cm]{figs/MDP_asbtract.png}}
% \caption{Basic components of an MDP model formulation.}
% \label{MDP_1}  
% \end{figure*}
% %  Figure 2 end

%  %  Figure 2 
% \begin{figure*}[ht] 
% \centering
% {\includegraphics[width= 12 cm]{}}
% \caption{Basic components of the any MDP model formulation.}
% \label{MDP_2}  
% \end{figure*}
% %  Figure 2 end

\FloatBarrier
\section{MDP-Based Framework for UAV Missions}

The dynamic task assignment problem considered in this work is illustrated in Figure~\ref{BD_PF}. The UAV is required to complete multiple goals, with their priorities updated in real time by the base station (BS) according to mission needs. To support this, we employ a Markov Decision Process (MDP) framework that captures the UAV’s operational state, mission objectives, and environmental uncertainties.

Within this framework, the UAV evaluates the cost of each feasible action and selects the one that minimizes expected mission cost while respecting priority constraints. Communication with the BS ensures that goal priorities are updated dynamically, allowing the UAV to adjust its policy as conditions change. 

The model further accounts for contingencies: the UAV may return to the BS for repair after a fault, or recharge its battery when energy levels are low. By integrating mission accomplishment with resource management, threat evasion, and fault tolerance, the proposed MDP formulation provides a structured approach for ensuring safe and efficient operations in uncertain and adversarial environments.

\FloatBarrier
\subsection{Definition of State Variables}

A crucial step in formulating an MDP is specifying the state variables that accurately capture the UAV’s operational context. These variables must reflect not only the UAV’s internal health and energy constraints but also its interaction with mission goals and external threats. For this problem, the overall state space $\mathcal{S}$ is given as:

\[
\begin{aligned}
\mathcal{S} &= \{s_1, s_2, \ldots, s_n\}, \quad s_i = \{f_i, R_i, G_i, l_i, c_i, t_i, m_i\}, \quad i \in \{1,2,\ldots,n\} \\
f_i &\in \{1,2,3,4\}, \\
R_i &= \{r_{1,i}, r_{2,i}, \ldots, r_{k,i}\}, \quad r_{j,i} \in \{0,1\}, \quad j \in \{1,\ldots,k\}, \\
G_i &= \{g_{1,i}, g_{2,i}, \ldots, g_{k,i}\}, \quad g_{j,i} \in \{0,1,2\}, \quad j \in \{1,\ldots,k\}, \\
l_i &\in \{1,2,\ldots,q\}, \\
c_i &\in \{0,1,2,\ldots,k\}, \\
t_i &\in \{0,1,2\}, \\
m_i &\in \{0,1\}.
\end{aligned}
\]

Each state $s_i$ is thus defined by eight variables, which together capture the UAV’s internal status and its environment. A brief description of these variables is provided below.

\begin{itemize}
    \item \textbf{Fault status ($f_i$):} Represents the UAV’s health mode, with eight possible categories (Table -1) \cite{quamar2025fault}.
    \item \textbf{Range feasibility ($R_i$):} A set of $k$ binary indicators showing whether the UAV has enough energy to attempt each goal. If $r_{j,i}=1$, the $j$-th goal is reachable given the current SoC; otherwise, $r_{j,i}=0$.  
    \item \textbf{Goal priorities ($G_i$):} A set of $k$ flags specifying the importance of each goal. Values include: $g_{j,i}=0$ (goal achieved or irrelevant), $g_{j,i}=1$ (low priority), and $g_{j,i}=2$ (high priority).  
    \item \textbf{UAV location ($l_i$):} Index of the UAV’s current region in the grid-based environment ($q$ regions in total).  
    \item \textbf{Commitment status ($c_i$):} Indicates whether the UAV is currently committed to any goal ($c_i>0$) or unassigned ($c_i=0$).  
    \item \textbf{Threat level ($t_i$):} Encodes the environmental risk: $t_i=0$ (no threat), $t_i=1$ (low threat), or $t_i=2$ (high threat).  
    \item \textbf{Navigation mode ($m_i$):} Reflects the UAV’s agility setting, where $m_i=0$ denotes normal cruising and $m_i=1$ indicates agile, high-speed evasive maneuvers.  
\end{itemize}

This formulation ensures that all mission-relevant factors—fault tolerance, energy feasibility, mission priorities, spatial information, and threat conditions—are explicitly captured in the state space. By modeling these diverse variables together, the UAV can reason about complex trade-offs, such as whether to commit to a high-priority goal, conserve energy, or evade a threat. The high dimensionality and stochastic behavior of these variables make classical deterministic optimization unsuitable, thereby motivating the use of the Markov Decision Process framework for robust mission management.

\FloatBarrier
\subsection{Decision Variables}

At each decision epoch, the UAV selects an action from a finite set of alternatives that reflect its mission objectives and operational constraints. The available decision set is:

\[
D = \{\text{goal commitment}, \text{service}, \text{recharge}, \text{evasion}\}.
\]

These four actions are defined as follows:
\begin{itemize}
    \item \textbf{Goal commitment:} Allocate resources to pursue a designated mission goal.  
    \item \textbf{Service:} Request maintenance or replacement of faulty onboard components.  
    \item \textbf{Recharge:} Return to the base station to restore battery capacity.  
    \item \textbf{Evasion:} Switch to agile navigation mode to avoid threats.  
\end{itemize}

Actions are event-driven, meaning they are triggered whenever the UAV transitions from one state to another. While the time required for such transitions may vary, temporal dynamics are not explicitly modeled in the MDP. Instead, each decision $d \in D$ alters the system by deterministically updating the commitment variable $c$ and stochastically influencing other factors such as fault status, goal priorities, and threat levels. This causal chain—\textit{state} $s \rightarrow$ \textit{action} $d \rightarrow$ \textit{new state} $s'$—forms the foundation for the policy optimization discussed in later sections.

%\textcolor{red}{(check if a decision for \textbf{threat evasion} needs to be included.)}
\FloatBarrier
\subsection{Cost Function}

The MDP model evaluates decisions using a cost-based criterion that captures the main operational challenges faced by a UAV during its mission. The overall cost function depends on both the state and decision variables and integrates five components:

\begin{itemize}
    \item \textbf{Goal-related penalty:} A cost is incurred for unassigned goals that are within feasible range. User-defined constants $\eta_j$ weight the relative importance of each goal.  
    \item \textbf{Distance-based cost:} A location-dependent function $h(d,l_i)$ accounts for the effort needed to reach a goal from the UAV’s current position. This can be modeled, for example, using Euclidean distance.  
    \item \textbf{Fault cost:} The term $\mathfrak{f}(f_i,r_i)$ captures penalties associated with actuator/sensor faults or low battery conditions, reflecting degraded UAV health.  
    \item \textbf{Range violation cost:} If a goal is outside the UAV’s feasible range, an additional penalty is applied. Constants $\delta_j$ quantify the severity of such failures.  
    \item \textbf{Threat-related cost:} The function $p(t_i,m_i)$ penalizes evasive actions in the presence of threats, accounting for the extra energy and risk involved in switching to agile navigation mode.  
\end{itemize}

The consolidated cost function is expressed as:

\begin{equation}
\label{cost}
J(s_i, d) = \sum_{j=1}^k \eta_j g_{j,i} r_{j,i}\big(1 - \mathbb{I}_j(c_i)\big) + h(d,l_i) + \mathfrak{f}(f_i,r_i) + \sum_{j=1}^k \delta_j g_{j,i}(1-r_{j,i}) + p(t_i,m_i).
\end{equation}

Here, each term corresponds to one of the components listed above. The formulation ensures non-negative costs and provides a unified measure for evaluating actions under uncertainty. Detailed examples of the functions $h(d,l_i)$, $\mathfrak{f}(f_i,r_i)$, and $p(t_i,m_i)$ shall be provided in the case study section.

\FloatBarrier
\subsection{Transition Probability}

Among the state variables, two are inherently stochastic: the UAV’s fault condition ($f_i$) and the goal priority levels ($G_i$). Consequently, the transition probability between states is governed by the joint distribution of these two variables. The general form of the transition function is

\begin{equation}
    P(s_i, d, s_j) = \Pr(s_j \mid s_i, d) = \Pr\big(f_j = f, g_{1,j}=g_1, \ldots, g_{k,j}=g_k \mid f_i = \bar{f}, g_{1,i}=\bar{g}_1, \ldots, g_{k,i}=\bar{g}_k, d\big).
\end{equation}

\paragraph{Modeling Assumptions.}  
To simplify the estimation of transition dynamics, the following assumptions are adopted:
\begin{itemize}
    \item Fault occurrences are independent of the evolution of goal priorities.  
    \item Updates in goal priority are not directly influenced by the chosen decision variable.  
\end{itemize}

Under these assumptions, the transition probability can be factorized as

\begin{equation}
  P(s_i, d, s_j) = \Pr\big(g_{1,j}=g_1, \ldots, g_{k,j}=g_k \mid g_{1,i}=\bar{g}_1, \ldots, g_{k,i}=\bar{g}_k, d\big) \; \Pr\big(f_j=f \mid f_i=\bar{f}, d\big).  
\end{equation}

\paragraph{Practical Estimation.}  
In real deployments, transition probabilities are derived from a combination of empirical data and engineering judgment. As more operational data becomes available, these probabilities may be recalibrated, requiring the UAV’s policy to be recomputed accordingly.  

In this work, the fault-transition probabilities are informed by experimental studies of actuator degradation reported in \cite{shahab2025formation}. That study introduced faults at different effectiveness levels (80\%, 60\%, and 40\%) and fit polynomial models to capture the relationship between fault severity and UAV performance. These fitted models are incorporated into our framework to assign probabilities to different failure modes. Although online updating is not performed in this study, the framework allows recalibration whenever new field or laboratory data are available, ensuring that the decision policy remains aligned with the UAV’s operating conditions.

\FloatBarrier
\subsection{Computation of Optimal Values}

State values are obtained via value iteration. We initialize all state values to zero and iteratively update them using the Bellman optimality recursion (cf. Section~2.4) until the updates converge. Convergence is certified by the sup–norm test
\[
\bigl\|V^{t+1}-V^{t}\bigr\|_{\infty} < \eta,
\]
with a small tolerance $\eta$ (typically $<10^{-6}$).

In our setting, the final step differs slightly from the standard policy extraction. Rather than immediately choosing the maximizing action for each state, we compute the \emph{state–action value} for each candidate goal–assignment decision:
\begin{equation}
\label{eq:SA_value}
\mathcal{V}(s,d) \;=\; -\,J(s,d)\;+\;\sum_{s'\in S}\gamma\,P(s'\mid s,d)\,V^{*}(s') ,
\end{equation}
which evaluates the expected discounted return (negative cost) of applying $d$ in $s$ and then following the optimal value function thereafter. These $\mathcal{V}(s,d)$ scores are then used by the higher-level tasking logic (next section) to prioritize assignments.

%%----- ALgo start -------
% \begin{algorithm}[htb]
% \caption{Value Iteration with Cost Criterion}
% \label{alg:task_values}
% \begin{algorithmic}[1]
% \Require State space $S$; decision set $D$; cost $J(s,d)$; transition kernel $P(s'\!\mid s,d)$; discount factor $\gamma$; tolerance $\eta$
% \Ensure Optimal value function $V^{*}(s)$ and (optional) policy scores $\mathcal{V}(s,d)$
% \State $V(s)\gets 0,\;\forall s\in S$ \Comment{Initialization}
% \State $\Delta\gets \infty$
% \While{$\Delta \ge \eta$}
% \Comment{Iterate to convergence}
%   \State $\Delta\gets 0$
%   \ForAll{$s\in S$}
%     \State $v_{\text{old}}\gets V(s)$
%     \State $V(s)\gets \displaystyle \max_{d\in D}\Bigl\{-J(s,d)+\sum_{s'\in S}\gamma\,P(s'\mid s,d)\,V(s')\Bigr\}$
%     \State $\Delta\gets \max\bigl(\Delta,\;|V(s)-v_{\text{old}}|\bigr)$
%   \EndFor
% \EndWhile
% \State $V^{*}(s)\gets V(s),\;\forall s\in S$
% \State \textbf{(Optional) Action scoring for task assignment:}
% \ForAll{$s\in S$}
%   \ForAll{$d\in D$}
%     \State $\mathcal{V}(s,d)\gets -J(s,d)+\sum_{s'\in S}\gamma\,P(s'\mid s,d)\,V^{*}(s')$ \Comment{Eq.~\eqref{eq:SA_value}}
%   \EndFor
% \EndFor
% \State \Return $V^{*}(s)$ and (optionally) $\mathcal{V}(s,d)$
% \end{algorithmic}
% \end{algorithm}

%%----- ALgo Ends-------

\noindent
\textit{Remarks.} (i) The maximization over $\{-J+\cdot\}$ is equivalent to minimizing expected cumulative cost. (ii) Using $\mathcal{V}(s,d)$ exposes the relative merit of each goal-commit/recharge/service/evasion action at a state, which we later exploit for priority-based assignment and decomposition.

%  %  Figure 4
% \begin{figure*}[ht] 
% \centering
% {\includegraphics[width= 14 cm]{figs/H_bar.png}}
% \caption{Optimal actions corresponding to optimal policy evaluated for different weight associated with the cost penalty.}
% \label{bar_1}  
% \end{figure*}
% %  Figure 4 end

%_______________Table 3________________

\renewcommand{\arraystretch}{1} % Increases row height for better readability
\begin{table*}[h]
\centering
\caption{State Variables and Ranges}
\label{table_2}
\setlength{\tabcolsep}{15pt} % Adjusts column spacing for better width fit
\begin{tabular}{|c|c|c|}
\hline 
\textbf{States} & \textbf{Range} & \textbf{Remark} \\
\hline 
$Fault$ ($f_i$) & 1-8 & Presented in Table 1 \cite{quamar2025fault} \\
\hline 
$Range$ ($r_i$) & 0-1 & \begin{tabular}{l}
0 - Goal not in range \\
1 - Goal in range
\end{tabular} \\
\hline 
$Goal priority$ ($g_i$) & 0-2 & \begin{tabular}{l}
0 - Goal achieved \\
1 - Low priority \\
2 - High priority
\end{tabular} \\
\hline 
$Location$ ($l_i$) & 0-7 & Grid size (4x2) \\
\hline  
$Commitment$ $to$ $goal$ ($c_i$) & 0-3 & \begin{tabular}{l}
0 - No commitment \\
1 - Commitment to $g_1$ \\
2 - Commitment to $g_2$ \\
3 - Commitment to $g_3$
\end{tabular} \\
\hline 
$Threat$ ($t_i$) & 0-2 & \begin{tabular}{l}
0 - No threat \\
1 - Medium threat \\
2 - High threat
\end{tabular} \\
\hline 
$Navigation$ $mode$ ($m_i$) & 0-1 & \begin{tabular}{l}
0 - Normal mode \\
1 - Agile mode
\end{tabular} \\
\hline
\end{tabular}
\end{table*}

%_________Table 3 ends ________________
\FloatBarrier
\subsection{Scalability Analysis}
\label{sec:scalability}

For our UAV mission scenario with three goals and an 4×2 grid (see Table~\ref{table_2}), the resulting MDP state–action space goes to \(N = 331{,}776\) states.  From the governing  state–space equation of our model,
\begin{equation}
\label{eq:state_count}
N \;=\; f_s \,\times\, \bigl(2^{g_s} \times 3^{g_s} \times (g_s+1)\bigr) 
        \,\times\, l_s \,\times\, t_s \,\times\, m_s,
\end{equation}
it is evident that \(N\) grows \emph{nonlinearly} in the number of goals \(g_s\), while scaling only \emph{linearly} with 
the grid size \(l_s\), threat levels \(t_s\), and agility modes \(m_s\).  Consequently, the number of goals \(g_s\) becomes the dominant driver of computational complexity.

\medskip

\noindent\textbf{Empirical Convergence and Scalability.}
Table~\ref{tab:mdp_comparison} summarizes the empirical comparison between the global and decomposed MDP formulations. Value iteration on the 4,608-state decomposed MDP (single-goal case) converged in approximately 1 seconds whereas the global MDP with 331,776 states (three-goal case) required approximately 2,352.5,seconds to converge. The decomposed approach also achieved a 20-fold reduction in memory usage with no loss in policy fidelity (100\% similarity). Extrapolating these results to multi-goal scenarios (up to ten goals) produces the scalability trend shown in Figure~\ref{scalability}, where the near-linear power-law relationship highlights how problem dimensionality quickly renders a full MDP intractable.

\begin{table}[b!]
  \caption{Goals vs.\ state-space size and extrapolated value iteration convergence times}
  \label{tab_scale}
  \centering
  \resizebox{\columnwidth}{!}{%
  \begin{tabular}{|c|cccccccccc|}
  \hline
  \textbf{Goals} 
    & 1   & 2     & 3      & 4       & 5        & 6         & 7          & 8           & 9            & 10            \\ \hline
  \textbf{States} 
    & \(4.6\times10^3\) 
    & \(4.1\times10^4\) 
    & \(3.3\times10^5\) 
    & \(2.0\times10^6\) 
    & \(1.8\times10^7\) 
    & \(1.3\times10^8\) 
    & \(8.6\times10^8\) 
    & \(5.8\times10^9\) 
    & \(3.9\times10^{10}\) 
    & \(2.6\times10^{11}\) 
    \\ \hline
  \textbf{Time (s)} 
    & \(1\) 
    & \(\sim5.4\times10^1\) 
    & \(2.35\times10^3\) 
    & \(\sim6.1\times10^4\) 
    & \(\sim3.3\times10^6\) 
    & \(\sim1.2\times10^8\) 
    & \(\sim3.7\times10^9\) 
    & \(\sim1.2\times10^{11}\) 
    & \(\sim3.7\times10^{12}\) 
    & \(\sim1.1\times10^{14}\) 
    \\ \hline
  \end{tabular}%
  }
\end{table}

\begin{table}[h!]
\centering
\caption{Comparison of Global and Decomposed MDP Performance}
\begin{tabular}{lcccc}
\toprule
\textbf{Method} & \textbf{No. of States} & \textbf{Runtime (s)} & \textbf{Memory (MB)} & \textbf{Policy Similarity (\%)} \\
\midrule
Global MDP     & 331,776 & 2,352.5 & 890 & 100 (ref) \\
Decomposed MDP & 4,608   & 1.0     & 42  & 100      \\
\bottomrule
\end{tabular}
\label{tab:mdp_comparison}
\end{table}

\begin{figure*}[ht]
  \centering
  \includegraphics[width=\textwidth]{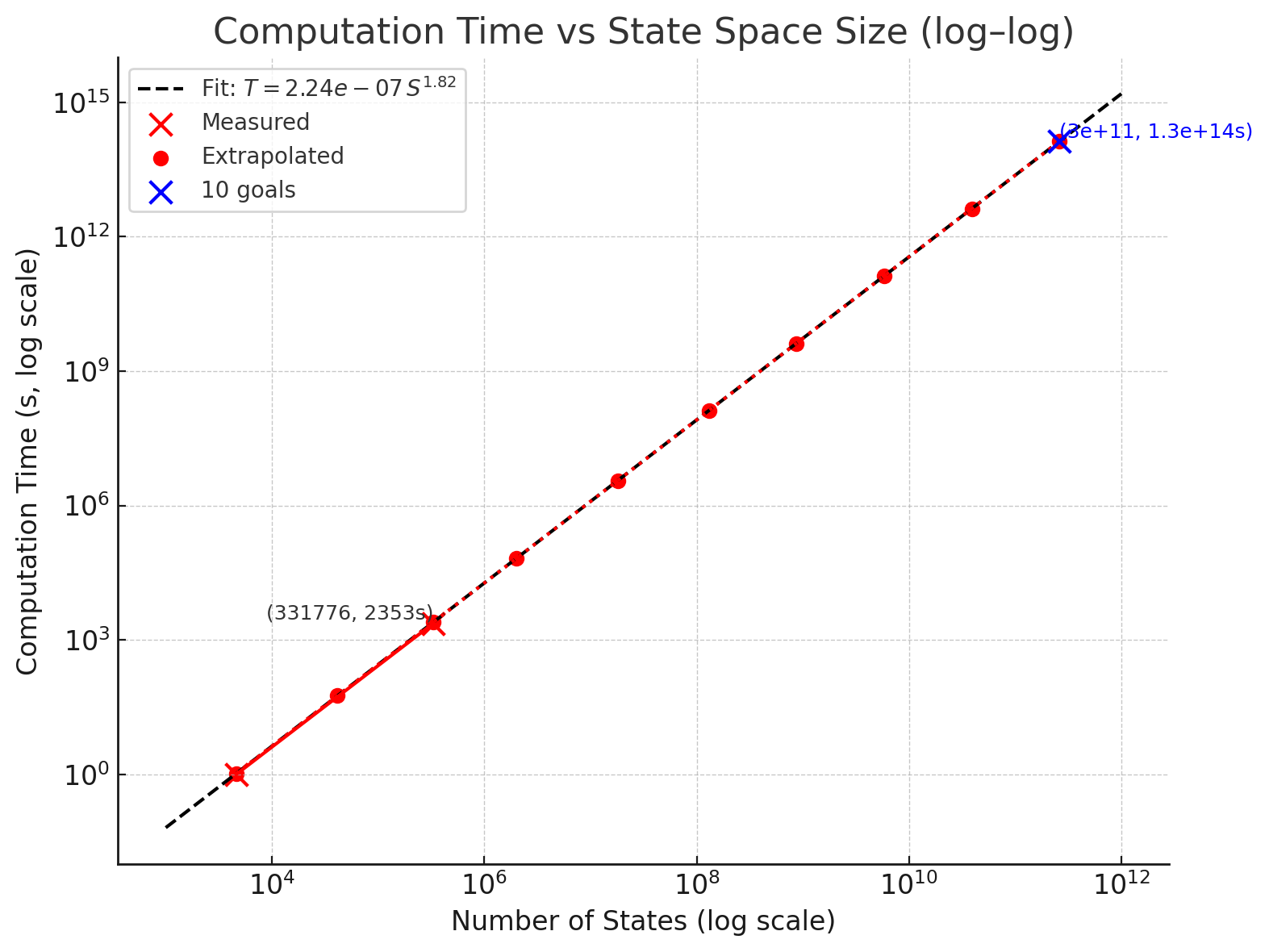}
  \caption{Value-iteration convergence time vs.\ state–space size (log–log), showing measured points and power–law extrapolation.}
  \label{scalability}
\end{figure*}

\medskip

\noindent\textbf{Computational Bottlenecks.}
\begin{itemize}
  \item \emph{Memory}: Transition and value matrices become prohibitively large.
  \item \emph{Computation}: Convergence time rises sharply—nearly exponentially in \(g_s\).
  \item \emph{Scalability}: Beyond a few goals, solving the full MDP is infeasible in real time.
\end{itemize}

\noindent
These observations motivate decomposition or multi-agent strategies.  In the next subsection, we introduce our \emph{MDP decomposition} algorithms, which partitions the global MDP into goal-specific subproblems dramatically reducing each subproblem’s size while preserving global optimality.

\FloatBarrier
\section{MDP Decomposition}
In this section, we introduce two decomposition algorithms aimed at addressing the scalability limitations of large-scale MDPs. The central idea is to partition the global state space, defined by key mission variables such as fault status ($f$), range feasibility ($R$), goal priority ($g$), UAV location ($l$), goal commitment ($c$), threat level ($t$), and navigation mode ($m$)—into a set of independent sub-MDPs. Each sub-MDP focuses on a reduced subset of states, thereby lowering computational complexity while preserving decision quality.  

The first algorithm is designed to systematically divide the global problem into sub-MDPs according to different decomposition criteria (e.g., goal-based, region-based, or fault-based partitioning), ensuring that the resulting subspaces remain manageable in size. The second algorithm assigns computational priority to these sub-MDPs by evaluating their relative importance using a local solver (such as value iteration). This priority-driven approach ensures that UAV resources are allocated first to the most critical subproblems. The results of all sub-MDPs are then integrated into a joint global policy, as shown in Figure \ref{MDP_combine}. By combining decomposition with priority-based policy synthesis, the proposed framework achieves a balance between computational efficiency and mission effectiveness in uncertain and dynamic environments.
 %  Figure 3 starts
\begin{figure} 
\centering
{\includegraphics[width= 11 cm]{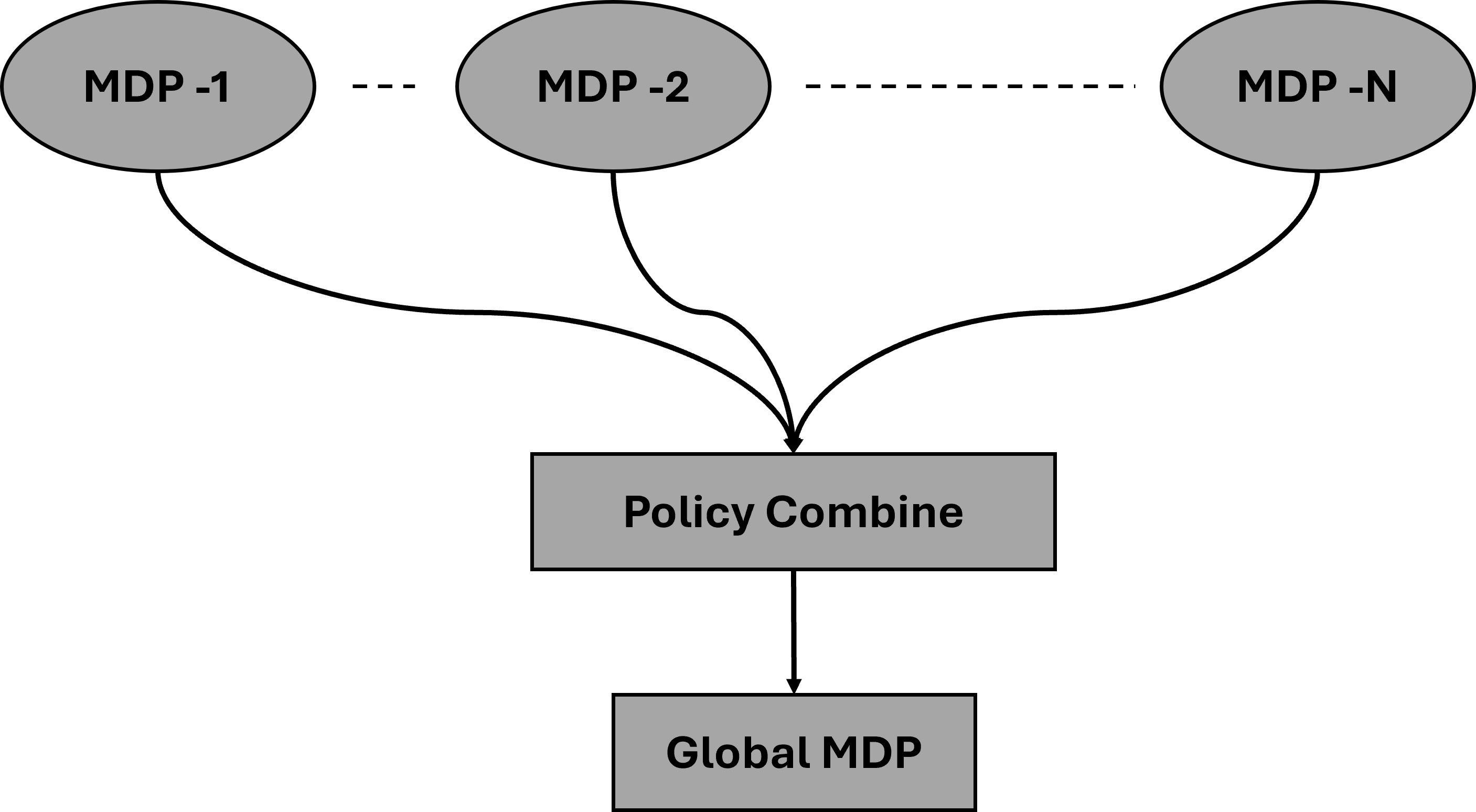}}
\caption{Simplified block representation of decomposed sub-MDP policies combined into a unified global policy.}
\label{MDP_combine}  
\end{figure}
%  Figure 3 end

%\FloatBarrier
\subsection{Algorithm 1: Factor Based MDP Decomposition}
Below is the description and pseudo code for the factor based MDP Decomposition algorithm.

%\FloatBarrier
\subsection*{Description of Algorithm 1}
\begin{itemize}
    \item \textbf{Input:} Global state space $S$ (with variables $f$, $R$, $g$, $l$, $c$, $t$, $m$), action set $A$, transition model $P$, reward function $r$, domain parameters (set of Goals $\mathcal{G}$, Grid Locations $\mathcal{L}$, Fault Categories $\mathcal{F}$), computational threshold $T_{\text{max}}$, and optional weights $w_g$, $w_l$, $w_f$.
    \item \textbf{Output:} A decomposition plan that assigns each state $s\in S$ to a sub-MDP $M_j$ and labels the type (Goal-based, Location-based, Fault-based, or Mixed).
    \item \textbf{Method:} The algorithm partitions the state space by first creating candidate sub-MDPs based on goal, location, and fault. If some candidates overlap or are too small, they are merged using a scoring function.
\end{itemize}

%%%%%% Algo -2 %%%%%%%%%%%%%

\begin{algorithm}[htbp]
\small
\caption{Factor Based MDP Decomposition}\label{algo1}
\SetAlgoLined
\DontPrintSemicolon
\KwIn{Global state space $S$, action set $A$, transition model $P$, reward function $r$, domain parameters ($\mathcal{G}, \mathcal{L}, \mathcal{F}$), threshold $T_{\text{max}}$, weights $w_g, w_l, w_f$}
\KwOut{Decomposition plan $\{M_1, M_2, \dots, M_N\}$ and mapping $\phi: S \to \{1,2,\dots,N\}$}
\;

\textbf{Initialize:}\;
Create empty candidate sets: $M_{\rm goal} \gets \emptyset$, $M_{\rm loc} \gets \emptyset$, $M_{\rm fault} \gets \emptyset$, $M_{\rm mixed} \gets \emptyset$\;
Set candidate list $\mathcal{C} \gets \emptyset$\;
\BlankLine

\textbf{Goal-Based Partitioning:}\;\\
\For{each goal $g \in \mathcal{G}$}{
  Extract $S_g \subset S$ where the indicator for $g$ is positive\;\\
  \If{$|S_g| \le T_{\text{max}}$}{
    Form candidate sub-MDP: $M_g = (S_g, A_g, P_g, r_g)$\;.
    Add $M_g$ to $M_{\rm goal}$\;
  }
}
\BlankLine

\textbf{Location-Based Partitioning:}\;
Partition grid $\mathcal{L}$ into regions or clusters\;\\
\For{each region $l \in \mathcal{L}'$}{
  Extract $S_l \subset S$ where $l(s)$ lies in region $l$\;\\
  \If{$|S_l| \le T_{\text{max}}$}{
    Form candidate sub-MDP: $M_l = (S_l, A_l, P_l, r_l)$\;.
    Add $M_l$ to $M_{\rm loc}$\;
  }
}
\BlankLine

\textbf{Fault-Based Partitioning:}\;\\
\For{each fault category $f \in \mathcal{F}$}{
  Extract $S_f \subset S$ where the fault indicator equals $f$\;\\
  \If{$|S_f| \le T_{\text{max}}$}{
    Form candidate sub-MDP: $M_f = (S_f, A_f, P_f, r_f)$\;.
    Add $M_f$ to $M_{\rm fault}$\;
  }
}
\BlankLine

\textbf{Mixed Decomposition (Optional):}\;\\
\For{each case with overlapping partitions or very small candidates}{
  Merge candidate sub-MDPs based on:
  \[
    \text{Score}(M_i) = w_g \cdot \text{RewardImpact}(M_i) + w_l \cdot \text{SpatialCoherence}(M_i) + w_f \cdot \text{FaultSensitivity}(M_i)
  \]\\
  If the score exceeds a threshold, form a mixed candidate $M_{\rm mix}$\;
}
\BlankLine

\textbf{Candidate Evaluation:}\;
Set $\mathcal{C} = M_{\rm goal} \cup M_{\rm loc} \cup M_{\rm fault} \cup M_{\rm mixed}$\;\\
\For{each candidate $M_j \in \mathcal{C}$}{
  Compute estimated computational cost and reward-coupling measure\;\\
  Discard candidates if cost $> T_{\text{max}}$ or if overlapping excessively with a higher scored candidate\;
}
\BlankLine

\textbf{Output Final Decomposition:}\;
Return the final set $\{M_1, M_2, \dots, M_N\}$ along with mapping $\phi: S \to \{1,2,\dots,N\}$ and label each $M_j$ with its type.\;
\end{algorithm}
%%%%%%%%%%% end %%%%%%%%%%%

\paragraph{\textbf{Remarks for Algorithm 1}}
The above pseudocode partitions the overall MDP into smaller sub-MDPs based on different criteria (goal, location, fault). In cases where there is significant overlap (e.g., when a high-priority goal falls within a particular region), the \emph{Mixed Decomposition} step merges the candidates using a scoring function. This helps in reducing redundancy and improves tractability for solving the MDP.

\FloatBarrier
\subsection{Algorithm 2: Priority-Based Recombination Algorithm}

This second algorithm computes a priority score for each sub-MDP (produced by Algorithm \ref{algo1}), solves each one using a local MDP solver (e.g., Value Iteration), assigns agents to the sub-MDPs based on these priorities, and finally combines the results into a joint global policy. In
cases where an agent is assigned to more than one sub-MDP, a meta-policy resolves conflicts by choosing the action from the highest-priority sub-MDP.

\FloatBarrier
\subsection*{Description of Algorithm 2}

\begin{itemize}
  \item \textbf{Input:} 
  Set of sub-MDPs $\{M_j\}_{j=1}^N$ (each with state/action/transition/cost components, e.g., $M_j=(S_j,A_j,P_j,r_j,\gamma_j)$), 
  local MDP solver (e.g., Value Iteration / Policy Iteration), 
  priority parameters (weights for reward potential, urgency, risk, resource use), 
  completion thresholds $\{C_j^{\min}\}$, 
  agent set $\mathcal{N}=\{N_1,\ldots,N_{n_a}\}$ with current states $\{s_k\}$, 
  and (optionally) precedence/eligibility constraints among sub-MDPs.

  \item \textbf{Output:} 
  A joint global policy that (i) assigns each agent $N_k$ to a sub-MDP (when needed) and (ii) issues an action $a_k$ per decision epoch; 
  in addition, the algorithm returns the priority order $L$, the local policies $\{\pi_j\}$ and values $\{V_j\}$ for each sub-MDP, and the completion flags $\{C_j\}$.

  \item \textbf{Method:} 
  The algorithm first computes a \emph{priority score} $\rho_j$ for each sub-MDP using domain weights (e.g., attainable reward from $r_j$, urgency/importance, risk/fault sensitivity, resource impact), then sorts sub-MDPs into a ranked list $L$. 
  Each sub-MDP $M_j$ is solved \emph{offline} with a local MDP solver to obtain $(\pi_j,V_j)$ and an expected return $R_j$. 
  During \emph{online} execution, agents are queued to sub-MDPs that are not yet complete and satisfy their preconditions; actions are queried from the corresponding local policies $a_k=\pi_j(s_k)$. 
  Sub-MDP progress is tracked via an expected performance measure $E_j$ and marked complete when $E_j \ge C_j^{\min}$. 
  A \emph{meta-policy} merges actions across sub-MDPs and resolves conflicts by selecting the action from the highest-priority sub-MDP for any agent participating in multiple tasks. 
  Upon environment changes (e.g., new faults or updated goals), the priority scores and affected local policies are recomputed and the joint policy is updated.
\end{itemize}

%%%%%% Algo - 3 %%%%%%%%%%%%%

\begin{algorithm}[htbp]
\small
\SetAlgoLined
\DontPrintSemicolon
\caption{Priority-Based Recombination Algorithm}\label{algo2}
\KwIn{Sub-MDPs $\{M_1, M_2, \dots, M_N\}$, local MDP solver, priority parameters,\\
completion thresholds $C_j^{\min}$, agent set $\{N_1, N_2, \dots, N_{na}\}$ with current states}
\KwOut{Joint global policy (agent actions $a_j$ for each agent $N_j$)}
\BlankLine
\textbf{Priority Ranking:}\\
\For{each sub-MDP $M_j$}{
  Compute priority score $\rho_j$ (using maximum attainable reward from $r_j$ and domain importance)\;
}
Sort sub-MDPs in descending order of $\rho_j$ into list $L = [M_{(1)}, M_{(2)}, \dots, M_{(N)}]$\;
\BlankLine

\textbf{Offline Policy Computation:}\\
\For{each sub-MDP $M_j \in L$}{
  Solve $M_j$ (using Value Iteration) to obtain local policy $\pi_j$ and value function $V_j$\;
  Store $\pi_j$ and estimate expected reward $R_j$\;
}
\BlankLine

\textbf{Agent Assignment and Online Execution:}\\
\For{each sub-MDP $M_j \in L$}{
  Initialize $E_j \gets 0$, $C_j \gets 0$, and agent queue $A_j \gets \emptyset$\;
}
Sort agents (e.g., based on previous assignments or proximity)\;
\For{each agent $N_k$}{
  \For{each $M_j \in L$}{
    \If{$C_j = 0$ and all precondition sub-MDPs for $M_j$ have $C=1$}{
      Add $N_k$ to queue $A_j$\;
    }
  }
}
\BlankLine

\textbf{Local Policy Query and Reward Update:}\\
\For{each sub-MDP $M_j \in L$}{
  \If{queue $A_j$ meets minimal agent count}{
    For all agents $N_k \in A_j$, assign $a_k = \pi_j(s_k)$\;
    Update $E_j \gets \mathbb{E}[\,r_j(\{s_k\}, \{a_k\})\,]$\;
    \If{$E_j \ge C_j^{\min}$}{
      Set $C_j \gets 1$\;
    }
  }
}
\BlankLine

\textbf{Meta-Policy and Conflict Resolution:}\\
Combine actions from all sub-MDPs to form the overall joint policy\;
For agents in multiple sub-MDPs, select action from the highest-priority sub-MDP\;
\BlankLine

\textbf{Replanning and Adaptation:}\\
\If{environment changes (e.g. new faults or updated goal priorities)}{
  Re-evaluate affected sub-MDPs (return to Steps 1--2) and update policies\;
}
\BlankLine

\Return{Joint global policy (actions $a_j$ for each agent $N_j$)}
\end{algorithm}
%%%%%%%%%%% end %%%%%%%%%%%

\paragraph{\textbf{Remarks for Algorithm 2}}
Algorithm~\ref{algo2} prioritizes the sub-MDPs by computing a score $\rho_j$, then solves each sub-MDP offline to obtain local policies. Agents are assigned based on these priorities; if conflicts arise (i.e., if an agent is eligible for more than one sub-MDP), a meta-policy resolves them by choosing the action from the sub-MDP with the highest priority. In dynamic environments where faults occur or priorities change, the algorithm can update the mission in real time by reevaluating affected sub-MDPs.

\FloatBarrier
\section{Theoretical Analysis of Decomposition}
\label{theorm}

In this section, we formally establish the theoretical equivalence between the policy obtained by solving the global MDP and the combined policies derived from decomposed sub-MDPs. 

\begin{theorem} \textbf{Policy Equivalence under Probabilistic Independence}
\label{thm:policy_equivalence}

Consider a global MDP defined as: 
\[
\mathcal{M}_G = (\mathcal{S}, \mathcal{A}, P, J, \gamma),
\]
with state space $\mathcal{S}$, action space $\mathcal{A}$, transition kernel $P$, cost function $J$, and discount factor $\gamma$. \textbf{Assume the following conditions hold:}

\begin{enumerate}
    \item The global state space is factorizable into independent goal-specific state spaces, i.e.,
    \[
    \mathcal{S} = \mathcal{S}_1 \times \mathcal{S}_2 \times \dots \times \mathcal{S}_n.
    \]

    \item The global cost function $J$ is additively separable into goal-specific sub-cost functions, given by:
    \[
    J(s,a) = \sum_{i=1}^{n} J_i(s_i,a_i),
    \]
    where each sub-cost $J_i$ depends solely on local state-action pairs $(s_i,a_i)$ associated with the $i^{th}$ sub-MDP.

    \item The stochastic variables representing faults ($f$) and threats ($t$) are identically distributed and independent across all sub-MDPs, satisfying:
    \[
    P(f',t'|s,a) = \prod_{i=1}^{n} P_i(f'_i,t'_i|s_i,a_i).
    \]

    \item The weight parameters ($p_i$) in each sub-MDP cost function are identical to the corresponding weights in the global cost function.
    
    \item The global state-transition probabilities factor into independent sub-transition models:
    \[
    P(s'|s,a) = \prod_{i=1}^{n} P_i(s'_i|s_i,a_i).
    \]
\end{enumerate}

Under these conditions, solving each sub-MDP individually via value or policy iteration yields local optimal policies $\pi_i^*(s_i)$. Furthermore, combining these local optimal policies into a joint global policy:
\[
\pi^*(s) = (\pi_1^*(s_1), \pi_2^*(s_2), \dots, \pi_n^*(s_n)),
\]
results in an optimal policy for the original global MDP. Additionally, the global value function is additive:
\[
V^*(s) = \sum_{i=1}^{n} V_i^*(s_i).
\]
\end{theorem}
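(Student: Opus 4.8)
The plan is to establish the additive value-function identity by direct substitution into the global Bellman optimality equation and then invoke uniqueness of the fixed point. First I would observe that, for the joint policy $\pi^*(s)=(\pi_1^*(s_1),\dots,\pi_n^*(s_n))$ to even be well-defined, the action space must inherit the same product structure as the state space, $\mathcal{A}=\mathcal{A}_1\times\cdots\times\mathcal{A}_n$, so that every global action decomposes as $a=(a_1,\dots,a_n)$ with $a_i$ influencing only component $i$. I would state this product structure explicitly, since it is implicit in Assumptions 1, 2, and 5 but is needed for the argument to close.

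The central step is to verify that the additive ansatz $V(s)=\sum_{i=1}^n V_i^*(s_i)$, where each $V_i^*$ solves the $i$-th sub-MDP Bellman equation, is itself a fixed point of the global Bellman operator. Substituting the ansatz and using the product factorization of the transition kernel (Assumption 5), the look-ahead term collapses: for each $i$, marginalizing every factor $P_j(\cdot\mid s_j,a_j)$ with $j\neq i$ to unity leaves
\[
\sum_{s'} P(s'\mid s,a)\,V(s') \;=\; \sum_{i=1}^n \sum_{s'_i} P_i(s'_i\mid s_i,a_i)\,V_i^*(s'_i).
\]
Combining this with the additive cost decomposition (Assumption 2) rewrites the bracketed quantity in the global Bellman equation as a sum of $n$ terms, where the $i$-th term depends on the action only through $a_i$.

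The decisive observation is then that minimizing a sum of terms, each depending on a disjoint action component, separates into independent per-component minimizations; this is where the product structure of $\mathcal{A}$ does the real work. Carrying out the separation yields $\min_a\{\cdots\}=\sum_i \min_{a_i}\{\cdots\}=\sum_i V_i^*(s_i)$, so the additive ansatz satisfies the global Bellman optimality equation exactly. Because $\gamma\in(0,1)$, the global Bellman operator is a $\gamma$-contraction on $(\mathbb{R}^{|\mathcal{S}|},\|\cdot\|_\infty)$ with a unique fixed point (the same contraction underlying the value-iteration convergence test in Section~2.4); hence $V^*(s)=\sum_i V_i^*(s_i)$, which proves additivity. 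The optimal joint policy follows by reading off the componentwise minimizer, $\pi^*(s)=(\arg\min_{a_1}\{\cdots\},\dots,\arg\min_{a_n}\{\cdots\})=(\pi_1^*(s_1),\dots,\pi_n^*(s_n))$.

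I expect the main obstacle to be rigorously justifying the action-minimization decomposition, since it silently requires the product structure on $\mathcal{A}$ and, crucially, the absence of any coupling constraints linking the $a_i$ across sub-MDPs (for instance, shared-resource or precedence constraints of the kind introduced in Algorithm~\ref{algo2} would break separability). I would therefore flag that the equivalence holds for the unconstrained product MDP and treat coupling as a source of potential suboptimality handled heuristically by the meta-policy. I would also note that Assumptions 3 and 4 are not strictly required for the algebra of the fixed-point argument; instead I would interpret them as the conditions ensuring that the $J_i$ and $P_i$ supplied to the local solvers genuinely coincide with the marginal cost and transition quantities appearing in the global model, so that the sub-MDPs actually solved are the correct ones.
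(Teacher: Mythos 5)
Your proposal is correct, and its core algebra is the same as the paper's: factor the cost (Assumption 2) and the transition kernel (Assumption 5), collapse the look-ahead expectation by marginalizing the unused factors of the product kernel, and split the minimization over the product action space into independent per-component minimizations. The genuine difference is in how the argument closes, and here your route is stronger. The paper passes from the global Bellman equation \eqref{eq:global_bellman} directly to the ``factored'' form \eqref{eq:global_bellman_factored}, in which \(V^*(s')\) has already been replaced by \(\sum_i V_i^*(s_i')\) inside the expectation; that is, it uses the additivity of the optimal value function as an intermediate step even though this additivity is one of the conclusions to be established, so the paper's derivation is circular (or at best has an unjustified step) as written. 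Your fixed-point argument repairs exactly this: you treat \(\sum_i V_i^*\) as an ansatz, verify by substitution that it satisfies the global Bellman optimality equation, and then invoke uniqueness of the fixed point of the \(\gamma\)-contraction on \((\mathbb{R}^{|\mathcal{S}|},\|\cdot\|_\infty)\) to conclude that the ansatz equals \(V^*\); the policy statement then follows by reading off the componentwise minimizers. That uniqueness step is what the paper's proof is missing. Your two caveats are also well taken and go beyond the paper: the product structure \(\mathcal{A}=\mathcal{A}_1\times\cdots\times\mathcal{A}_n\) with no coupling constraints among the \(a_i\) is used silently in both arguments (and is in fact violated by the precedence/eligibility and shared-agent constraints of Algorithm~\ref{algo2}, so the theorem covers only the unconstrained product MDP, with the meta-policy acting as a heuristic outside its scope), and Assumptions 3--4 contribute nothing to the algebra other than ensuring that the \(J_i\) and \(P_i\) handed to the local solvers are the true marginal quantities of the global model.
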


\begin{proof}
The Bellman optimality equation for the global MDP is:
\begin{equation}
V^*(s) = \min_{a \in \mathcal{A}} \left[J(s,a) + \gamma\sum_{s'\in\mathcal{S}}P(s'|s,a)V^*(s')\right]
    \label{eq:global_bellman}
\end{equation}

Applying assumptions (2) and (5), we rewrite the global equation as:
\begin{equation}
    V^*(s) = \min_{a_1,\dots,a_n} \sum_{i=1}^{n}\left[J_i(s_i,a_i) + \gamma\sum_{s'_i\in\mathcal{S}_i}P_i(s'_i|s_i,a_i)V_i^*(s'_i)\right]
    \label{eq:global_bellman_factored}
\end{equation}

Due to the additive separability of the cost function $J$ and multiplicative separability of the transition probabilities $P$, the summation across goals, this global minimization problem decomposes naturally into independent sub-problems:

\begin{equation}
    V^*(s) = \min_{a_1,\dots,a_n}\left[\left(J_1(s_1,a_1) + \gamma\sum_{s'_1}P_1(s'_1|s_1,a_1)V_1^*(s'_1)\right) + \dots + \left(J_n(s_n,a_n) + \gamma\sum_{s'_n}P_n(s'_n|s_n,a_n)V_n^*(s'_n)\right)\right]
    \label{eq:expanded_global_bellman}
\end{equation}

Given the independence between goal-specific state variables and actions (as ensured by the assumptions of the theorem), each of the bracketed terms in Equation~\eqref{eq:expanded_global_bellman} can be independently minimized without any interdependence. Consequently, the global optimization problem decomposes into $n$ independent local optimization problems, one for each sub-MDP:

\begin{equation}
    V_i^*(s_i) = \min_{a_i\in\mathcal{A}_i}\left[J_i(s_i,a_i) + \gamma\sum_{s'_i\in\mathcal{S}_i}P_i(s'_i|s_i,a_i)V_i^*(s'_i)\right], \quad \forall i \in \{1,\dots,n\}
    \label{eq:local_bellman}
\end{equation}

Each of these subproblems, defined by Equation~\eqref{eq:local_bellman}, is structurally identical to a smaller MDP problem that is independently solvable via standard MDP solution techniques such as value iteration or policy iteration. By solving these sub-MDPs separately, we obtain the local optimal policies $\pi_i^*(s_i)$ and the corresponding optimal value functions $V_i^*(s_i)$.

The critical insight from this decomposition is that the global optimal value function can now be expressed explicitly as the additive sum of the local optimal value functions:
\begin{equation}
    V^*(s) = \sum_{i=1}^{n} V_i^*(s_i)
    \label{eq:additive_value}
\end{equation}

Moreover, because each sub-MDP's optimal policy $\pi_i^*$ depends exclusively on its own state variables $s_i$, the optimal global policy is directly constructed by concatenating these local optimal policies:
\begin{equation}
    a^*(s) = (\pi_1^*(s_1), \pi_2^*(s_2), \dots, \pi_n^*(s_n))
    \label{eq:policy_concat}
\end{equation}

Hence, the global optimal action for any given state $s$ can be directly obtained by independently evaluating each sub-MDP's local optimal policy, significantly simplifying the overall computational task without sacrificing optimality.

In summary, Equations~\eqref{eq:expanded_global_bellman} through \eqref{eq:policy_concat} explicitly illustrate how the global Bellman optimality equation decomposes into independent subproblems and how the global policy is precisely the combination of these independent local solutions.
\end{proof}

This theorem theoretically substantiates our empirical observations presented in Section~\ref{sec-result}, confirming that the combined sub-MDP policy precisely matches the global MDP solution.

\begin{corollary}\textbf{Computational Efficiency of Decomposition -}
Under the conditions of Theorem~\ref{thm:policy_equivalence}, solving $n$ independent sub-MDPs reduces the computational complexity from:
\[
\mathcal{O}(|\mathcal{S}|^2) \quad \text{to} \quad \mathcal{O}\left(\sum_{i=1}^{n}|\mathcal{S}_i|^2\right)
\]

Given that the global state space size typically satisfies $|\mathcal{S}| = \prod_{i=1}^{n}|\mathcal{S}_i|$, this decomposition provides substantial computational savings, enabling scalable solutions for complex MDP problems without sacrificing optimality.
\end{corollary}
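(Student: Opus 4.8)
The plan is to reduce the corollary to a direct per-sweep cost comparison of value iteration, using Theorem~\ref{thm:policy_equivalence} to dispose of correctness up front so that only the cost accounting remains. First I would recall that a single sweep of value iteration on an MDP with state space $\mathcal{S}$ performs the Bellman backup $V^{t+1}(s)=\min_{a}\{J(s,a)+\gamma\sum_{s'}P(s'\mid s,a)V^{t}(s')\}$ at every state $s$; since each backup sums over every successor $s'$ in the support of $P(\cdot\mid s,a)$, the per-sweep work is $\Theta(|\mathcal{S}|^2)$ once the action-set size $|\mathcal{A}|$ is absorbed into the constant (it is unchanged by the decomposition). This fixes the baseline $\mathcal{O}(|\mathcal{S}|^2)$ for the monolithic global solve.

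Next I would apply Theorem~\ref{thm:policy_equivalence}, which under assumptions (1)--(5) splits the global Bellman operator into $n$ independent sub-MDP operators acting on the factors $\mathcal{S}_i$. Solving sub-MDP $i$ then costs $\Theta(|\mathcal{S}_i|^2)$ per sweep, and because the subproblems are solved independently their costs add, giving the aggregate $\mathcal{O}(\sum_{i=1}^{n}|\mathcal{S}_i|^2)$. To make the comparison genuinely apples-to-apples I would observe that the number of sweeps needed to reach a fixed tolerance $\eta$ obeys the standard geometric-contraction bound $\lceil \log(\eta(1-\gamma))/\log\gamma\rceil$, which depends only on $\eta$ and the common discount factor $\gamma$ inherited by the global problem and every sub-MDP; hence this multiplicative iteration factor cancels from the ratio and the complexities reduce exactly to the stated state-space terms.

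Finally I would quantify the gap using the product structure $|\mathcal{S}|=\prod_{i=1}^{n}|\mathcal{S}_i|$ supplied in the hypothesis. Squaring gives $|\mathcal{S}|^2=\prod_{i=1}^{n}|\mathcal{S}_i|^2$, a \emph{product} of precisely the terms that the decomposed bound merely \emph{sums}; whenever each $|\mathcal{S}_i|\ge 2$ the product dominates the sum super-exponentially, e.g.\ in the uniform case $|\mathcal{S}_i|=m$ the global cost is $m^{2n}$ against $n\,m^2$ for the decomposition, which establishes the claimed substantial savings.

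I do not expect a genuine obstacle here, since Theorem~\ref{thm:policy_equivalence} already furnishes the exact operator splitting; the one point requiring care is the bookkeeping that legitimizes the comparison, namely arguing explicitly that the iteration count and the action-set factor are common to both sides and may therefore be dropped before setting $|\mathcal{S}|^2$ against $\sum_i|\mathcal{S}_i|^2$. A secondary subtlety is whether a backup should be charged $|\mathcal{S}|^2$ or $|\mathcal{S}|^2|\mathcal{A}|$; I would note that if the action space also factorizes, $|\mathcal{A}|=\prod_i|\mathcal{A}_i|$, the decomposition advantage is strictly larger, so reporting only the state-space term is a conservative statement of the gain.
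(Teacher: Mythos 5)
Your proposal is correct and coincides with the reasoning the paper relies on: the paper states this corollary without any explicit proof, treating it as immediate from Theorem~\ref{thm:policy_equivalence} plus the standard per-sweep cost accounting of value iteration, which is exactly what you spell out (quadratic backup cost per sweep, the common iteration-count and action-set factors cancelling, and the product-versus-sum comparison via $|\mathcal{S}|=\prod_i|\mathcal{S}_i|$). If anything, your treatment is more careful than the paper's, since you explicitly note that the action space also factorizes, making the stated $\mathcal{O}(|\mathcal{S}|^2)$ versus $\mathcal{O}\bigl(\sum_i|\mathcal{S}_i|^2\bigr)$ comparison a conservative account of the savings.
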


The above corollary emphasizes the significant practical advantage of employing MDP decomposition for UAV mission planning, particularly in scenarios involving large state spaces and complex mission dynamics.

\FloatBarrier
\section{Simulation and Result Analysis}
\label{sec-result}
In this section, we present the simulation results for the proposed decomposed MDP framework as formulated in Algorithm~\ref{algo1}. The baseline model chosen for decomposition is adapted from our earlier work \cite{quamar2025fault}, which established a comprehensive MDP-based approach for UAV mission management under uncertainty. Our scalability analysis has demonstrated that the predominant computational bottleneck emerges from the increase in the number of mission goals, which leads to an exponential growth in the overall state space. To effectively address this challenge, we employ a goal-based decomposition strategy, partitioning the global MDP into smaller, goal-specific sub-MDPs. After independently solving these subproblems, we perform a recombination step to integrate the results, producing a global policy for the full mission. The following analysis provides a detailed assessment of both the decomposed and recombined models, including a comparison of their computational efficiency, solution quality, and the fidelity of the recombined policy relative to the original global MDP. This evaluation demonstrates the practical advantages of the proposed decomposition approach for scalable UAV mission planning.

\FloatBarrier
\subsection{Numerical case study for sub MDP model}
In this section, we present results of the decomposed MDP. Before discussing the detailed simulation outcomes, we briefly describe the computational environment used for the experiments. All simulations were performed in MATLAB R2022b on a personal computer equipped with an AMD Ryzen 7 4800H processor (2.9 GHz, 8 cores, 16 threads) and 16 GB of RAM. On average, the value iteration algorithm required approximately 1.0 seconds to converge to the optimal policy for the given problem size as shown in Figure ~\ref{val_iter}, compared to the convergence time of approximately \~2300 seconds as in base model \cite{quamar2025fault}. The total state size of the global MDP model was 331776 while for the decomposed model with single goal was 4608.

 %  Figure 5
\begin{figure} 
\centering
\includegraphics[width= 11 cm]{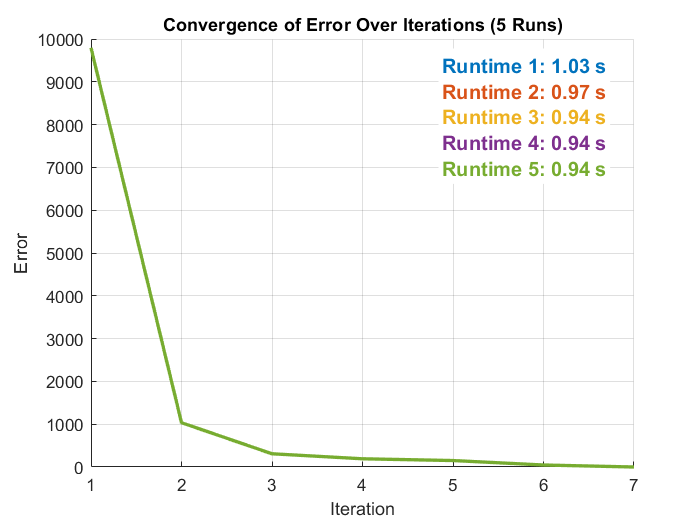}
\caption{Value Iteration convergence plot}
\label{val_iter}  
\end{figure}
%  Figure 5 end

\paragraph{\textbf{UAV mission execution with dynamic threat response for sub MDP}}

The decomposition of the global MDP into goal based sub MDP has  a total state size of 4608 states and the cost function obtained is as per equation \ref{local_cost}. while all the transition probabilities and decision variables remains the same as in \cite{quamar2025fault}.

\begin{equation}
\label{local_cost}
J_1(s_i,d)
= \underbrace{\eta_1\,g_{1,i}\,r_{1,i}\bigl(1 - \mathbb{I}_1(c_i)\bigr)}_{\substack{\text{reward‐penalty}\\\text{for goal 1}}}
\;+\;\underbrace{\mathfrak{f}(f_i,r_{1,i})}_{\substack{\text{fault‐dependent}\\\text{penalty}}}
\;+\;\underbrace{\delta_1\,g_{1,i}(1-r_{1,i})}_{\substack{\text{out‐of‐range}\\\text{penalty}}}
\;+\;\underbrace{p(t_i,m_i)}_{\substack{\text{threat‐mode}\\\text{penalty}}}\,,
\end{equation}

The state trajectory for the decomposed MDP problem with single goal is shown as in Figure \ref{Num_case_1}. In the current mission, we consider a region of 8 grids ranging from 0-7 with goal location assigned to \textbf{cell 5}. Cell 1 is dedicated for repair and recharge, thus in case the UAV needs repair or recharge, it will move to cell 1. Further it is assumed that the UAV maintains a bi-direction communication with the base to execute the mission with changing priorities within a dynamic environment.

%  Figure 5
\begin{figure}[h!]
     \centering
     \begin{subfigure}[b]{0.485\textwidth}
        \centering
    \includegraphics[width=7.0cm,height=5.5cm]{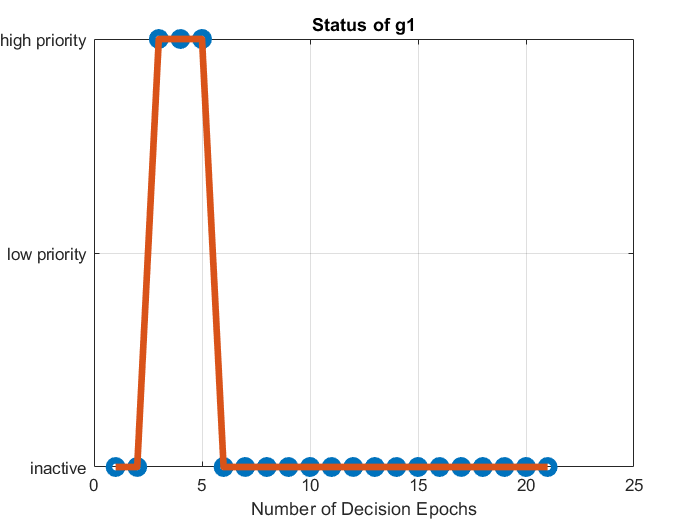}
        \caption{Goal Priority Indicator}
        \label{Xa}
     \end{subfigure}
     \hfill
     \begin{subfigure}[b]{0.485\textwidth}
        \centering
    \includegraphics[width=7.0cm,height=5.5cm]{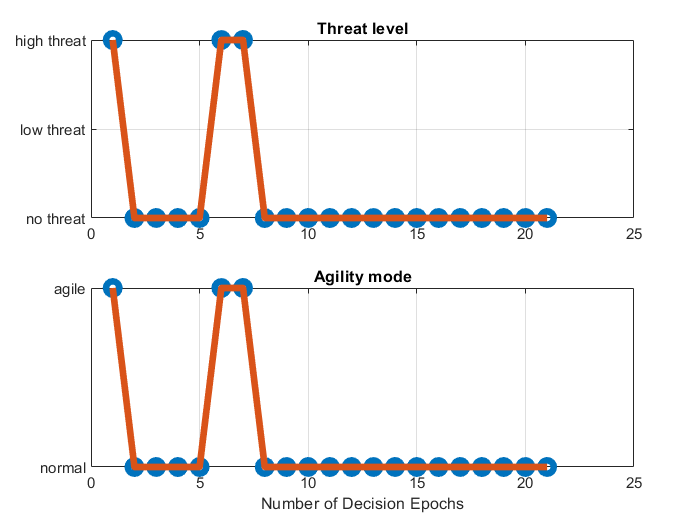}
        \caption{Threat and Agile Mode Indicator}
        \label{Xb}
     \end{subfigure}
     \hfill
     \begin{subfigure}[b]{0.485\textwidth}
        \centering
     \includegraphics[width=7.0cm,height=5.0cm]{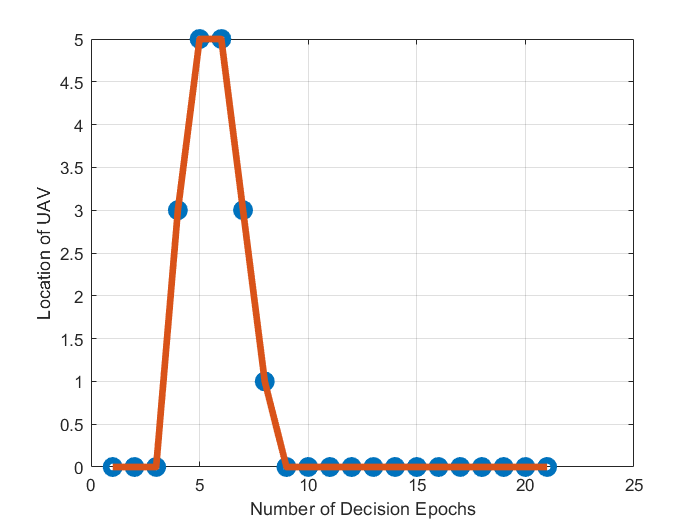}
        \caption{Location of UAV}
        \label{Xc}
     \end{subfigure}
     \hfill
     \begin{subfigure}[b]{0.485\textwidth}
        \centering
        \includegraphics[width=7.0cm,height=5.0cm]{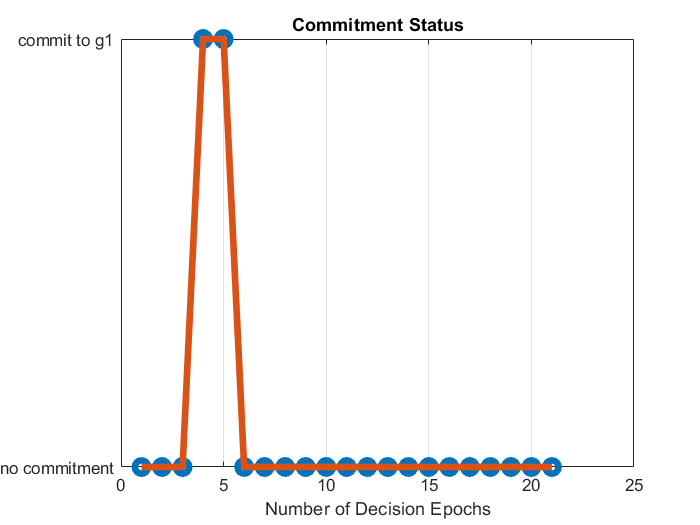}
        \caption{Goal Commitment of UAV}
        \label{Xd}
     \end{subfigure}
\caption{Case-I: Demonstration of the goal priority indicator, threat and agile mode, location of UAV, and the commitment level of UAV based on the dynamic mission requirements. Refer to Table~\ref{table_2} for Y-ticklabel indicators.}
\label{Num_case_1}
\end{figure}
%  Figure 5 end

% \paragraph{\textbf{Case-II: Demonstration of the UAV behavior in response to faults for sub MDP}}

% \paragraph{\textbf{Case-III:Demonstration of UAV behavior in response to energy constraints for sub MDP}}

Figure \ref{Num_case_1} illustrates a complete duty cycle for Goal 1 under dynamic mission directives. At epoch 3 (subfigure \ref{Xa}), the ground station elevates Goal 1 to “high priority,” prompting the UAV to commit to that task at epoch 4 (subfigure \ref{Xd}). The UAV departs towards its goal location and by epoch 5 reaches the goal (grid 5) and completes the mission (subfigure \ref{Xc}). At epoch 6 the goal flag and the UAV’s commitment both reset to zero, meaning the goal is achieved and With no further assignments, the UAV initiates its return towards base. During this transit the environmental threat level spikes to “high” (subfigure \ref{Xb}, immediately switching the UAV into agile evasion mode. Once the threat abates at epoch 8, the UAV reverts to normal navigation and completes its return to base. This case demonstrates seamless integration of goal prioritization, task commitment, and real-time threat-aware maneuvering within the proposed MDP framework.

\FloatBarrier
\subsection{Mapping Sub MDP to Global MDP Model.}

An essential step in validating the effectiveness of the proposed decomposition approach is to demonstrate how the local solutions (policies) obtained from sub-MDPs can be mapped and recombined to recover a global policy that closely approximates or matches the policy obtained by solving the original, full-scale MDP.

The mapping process is visualized in Figure \ref{Decom_BD}, which shows the flow from global MDP states, through filtering and partitioning into goal-specific sub-MDPs, local solution computation, and finally recombination and mapping back to the global decision space. This architecture ensures that the reduced sub-MDPs remain aligned with global mission objectives and constraints, even as they are solved independently.

%  %  Figure 4
\begin{figure*}[ht] 
\centering
\includegraphics[width=\textwidth,keepaspectratio]{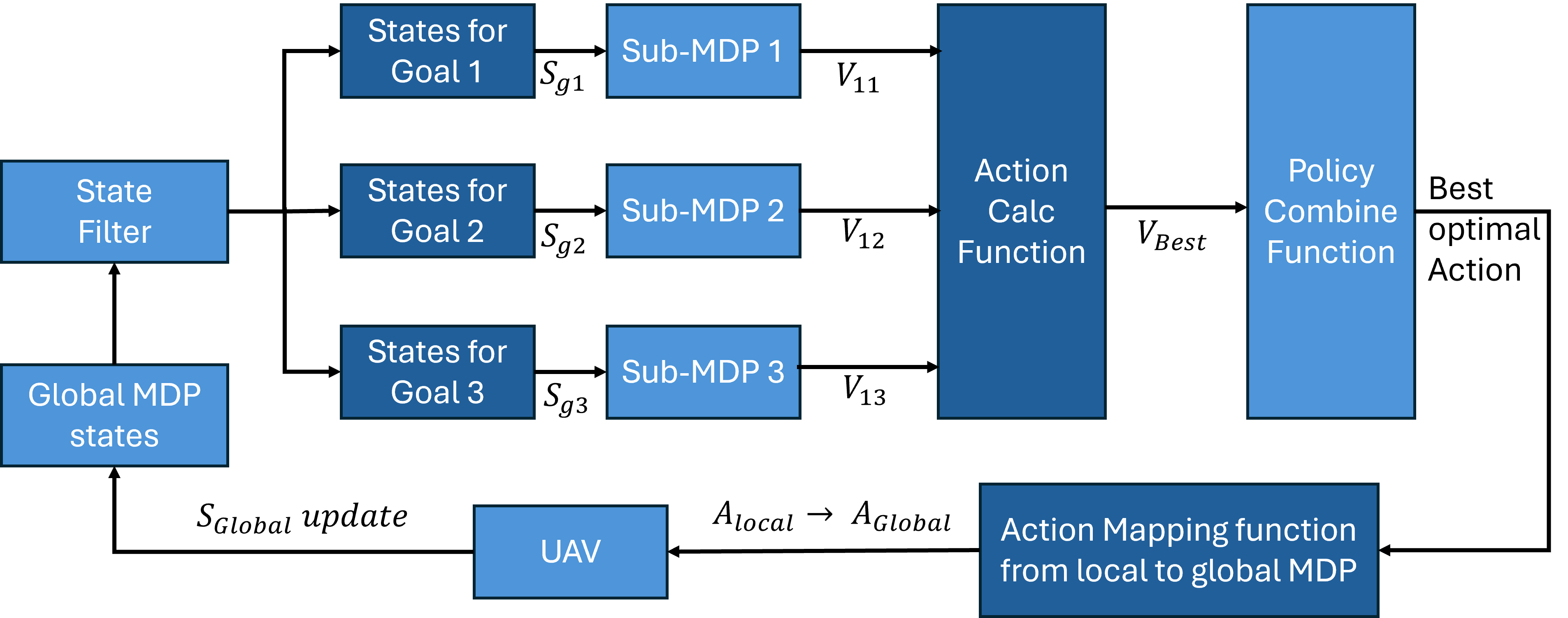}
\caption{Schematic block diagram representation of the proposed MDP model with decomposition architecture.}
\label{Decom_BD}  
\end{figure*}
% %  Figure 4 end

\textbf{Action Calculation and Policy Selection:} Central to the policy recombination process outlined in Algorithm~\ref{algo2} is the evaluation and selection of optimal actions across the set of sub-MDPs for each global state. In this framework, each sub-MDP focused on a specific mission goal, generates a candidate action and computes its associated value based on the current system state. To facilitate a consistent and objective comparison among these candidates, we employ the \texttt{ActionCalc\_with\_values} function in MATLAB that performs the offline policy computation and Combines action from all sub-MDPs to form the overall joint policy For agents in multiple sub-MDPs, select
action from the highest-priority sub-MDP which systematically evaluates all possible actions from each sub-MDP.

For every global state, the function executes the following steps:
\begin{itemize}
    \item \textbf{Action Value Evaluation:} It computes the expected cost-to-go for each candidate action suggested by the sub-MDPs, leveraging the local value functions ($V_{11}$, $V_{12}$, $V_{13}$) to quantify the long-term benefit or penalty associated with each decision. These values are calculated based on the current state and the respective transition dynamics and cost structure of each sub-MDP. The command window output in Figure~\ref{actioncalc_cmd} and the corresponding bar charts in Figure~\ref{submdp_action_policy} clearly illustrate this process, showing both the candidate actions and their computed values for a specific state.
    \item \textbf{Optimal Action Selection:} The algorithm identifies the action with the highest (i.e., least costly or most rewarding) value among all sub-MDP candidates. The corresponding sub-MDP is then marked as the most advantageous policy to pursue for that state, and this decision is highlighted in the "Best Policy" subplot of Figure~\ref{submdp_action_policy}.
    \item \textbf{Action Mapping:} The selected optimal action, initially defined in the local action space of the relevant sub-MDP, is subsequently mapped to its equivalent global MDP action using the formal mapping described in Table~4. This ensures that all chosen actions remain consistent with the global MDP's control architecture and are directly executable by the system.
\end{itemize}

This procedure is applied exhaustively across all global states, allowing the construction of a combined global policy from the ensemble of optimal sub-MDP decisions. Through this approach, the recombination algorithm effectively synthesizes the strengths of each goal-specific policy, yielding a control strategy that is both computationally efficient and mission-relevant. The visual outputs not only provide intuitive insight into the decision-making logic but also serve as a diagnostic tool for verifying the correctness and consistency of the recombined policy.

%%%%% figure starts %%%%%
\begin{figure}[htbp]
    \centering
    \includegraphics[width=0.75\textwidth]{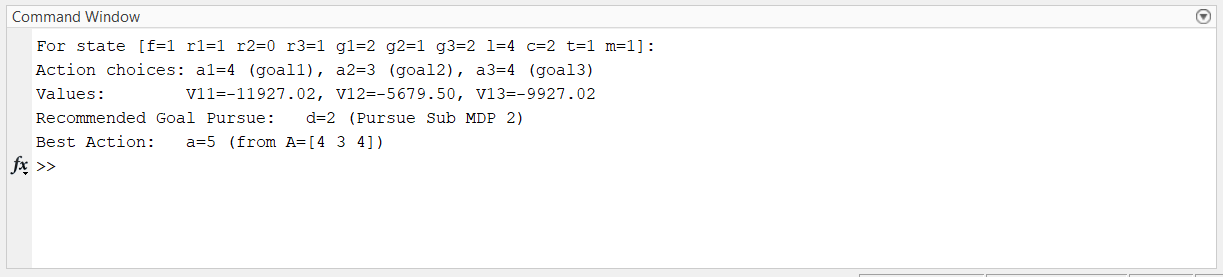}
    \caption{MATLAB output: Action and policy selection process for a sample global state.}
    \label{actioncalc_cmd}
\end{figure}

%%%%% figure ends %%%%%

%%%%% figure starts %%%%%
\begin{figure}[htbp]
    \centering
        \begin{subfigure}{0.48\textwidth}
        \centering
        \includegraphics[width=0.95\linewidth]{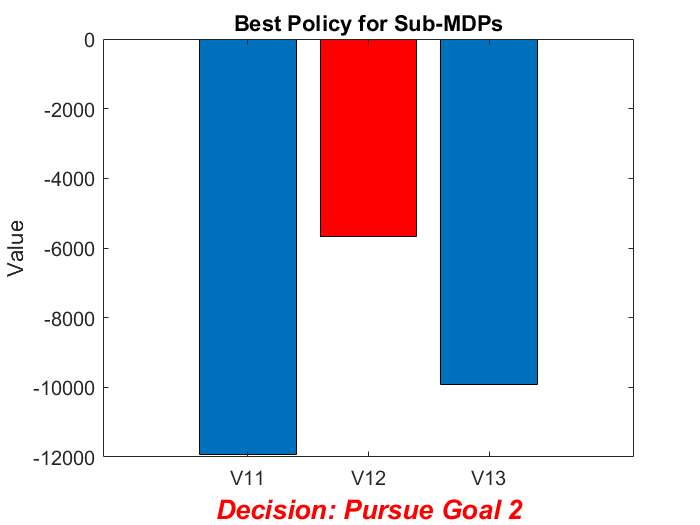}
        \caption{Optimal policy value comparison for sub-MDPs. The best policy (highlighted) is to pursue Goal 2.}
        \label{fig:best_policy_submdp}
    \end{subfigure}
    \hfill
    \begin{subfigure}{0.48\textwidth}
        \centering
        \includegraphics[width=0.95\linewidth]{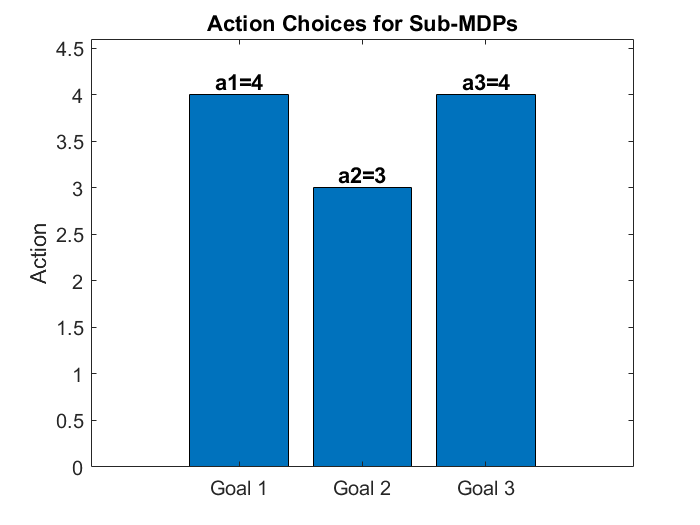}
        \caption{Action choices for each sub-MDP (Goal 1: $a_1=4$, Goal 2: $a_2=3$, Goal 3: $a_3=4$).}
        \label{fig:action_choices_submdp}
    \end{subfigure}
    \caption{Visualization of the action calculation and policy selection process for the sub-MDPs. Left: Action choices recommended by each sub-MDP. Right: Comparison of the value functions, showing the best sub-MDP to pursue.}
    \label{submdp_action_policy}
\end{figure}
%%%%% figure ends %%%%%%

\textbf{Policy Combination and Comparison:}  
Once the combined policy is constructed from the optimal actions selected across all sub-MDPs, we compare it with the benchmark policy derived from the global MDP. This comparison is performed for every state in the state space. Figures~\ref{fig:policy_agreement_count} and \ref{fig:policy_agreement_percent} illustrate the results: all actions in the combined policy are identical to those in the global policy, confirming the high fidelity of the recombination approach.

\begin{figure}[htbp]
    \centering
    \begin{subfigure}{0.48\textwidth}
        \centering
        \includegraphics[width=0.98\linewidth]{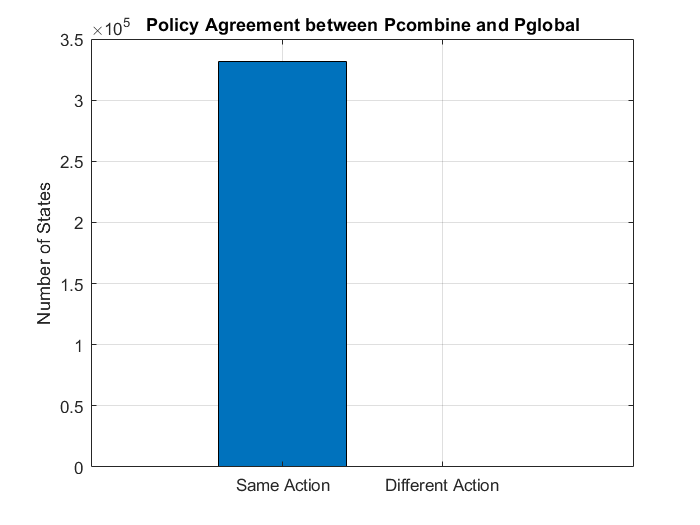}
        \caption{Number of states with same/different actions.}
        \label{fig:policy_agreement_count}
    \end{subfigure}
    \hfill
    \begin{subfigure}{0.48\textwidth}
        \centering
        \includegraphics[width=0.98\linewidth]{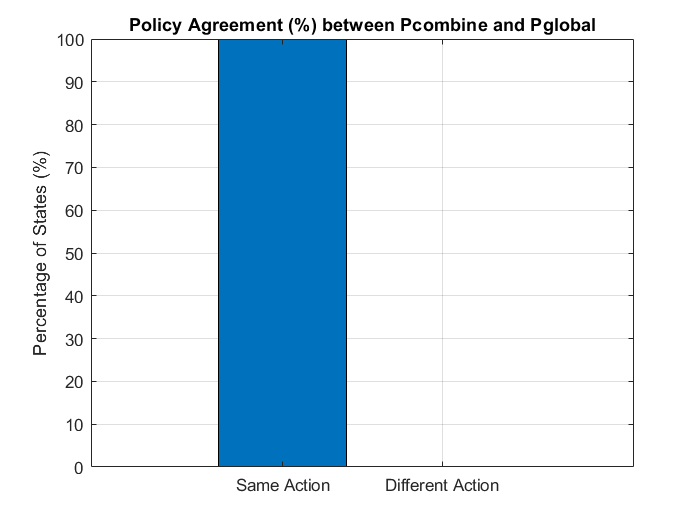}
        \caption{Percentage of states with identical actions.}
        \label{fig:policy_agreement_percent}
    \end{subfigure}
    \caption{Policy agreement between $P_{combine}$ and $P_{global}$: (a) State-by-state count, (b) Percentage of identical actions.}
    \label{fig:policy_agreement_both}
\end{figure}

A further breakdown of the MATLAB command window (see Figure~\ref{fig:agreement_cmd}) quantifies that all $331,776$ states yielded identical action recommendations in both policies, demonstrating a $100\%$ match and verifying the theoretical soundness of the mapping.

\begin{figure}[htbp]
    \centering
    \includegraphics[width=0.9\textwidth]{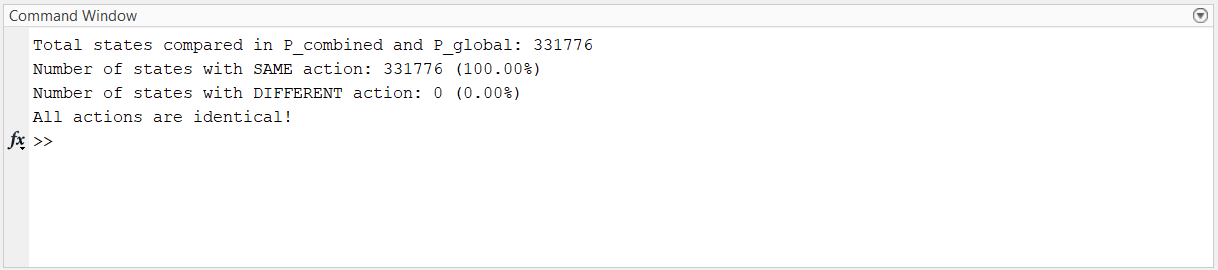}
    \caption{MATLAB output: Detailed statistics of action agreement between combined and global MDP policies.}
    \label{fig:agreement_cmd}
\end{figure}

\textbf{Action Mapping and State Comparison:}  
A critical step in ensuring the effectiveness of the decomposition framework is the accurate mapping of actions from the local sub-MDPs to the global MDP model. Since each sub-MDP is formulated with a reduced state-action space—focused on a specific goal or subset of mission variables—its optimal actions are not immediately aligned with the action definitions in the global MDP. To address this, we employ a dedicated action mapping function that translates the optimal actions recommended by the sub-MDP policies into their corresponding actions in the global policy space.

%%   table 3 starts %%%
\begin{table}[htbp]
\centering
\caption{Action for local and global MDP}
\label{table_3}
\begin{tabular}{|l|l|}
\hline
\textbf{Local MDP action ($a_i$) [6 Actions]} & \textbf{Global MDP action ($a$) [10 Actions]} \\
\hline
1 - No commitment & 1 - No commitment \\
2 - Commit to Goal$_d$ in normal mode & 2 - Commit to Goal$_1$ in normal mode \\
3 - No commitment in agile mode & 3 - Commit to Goal$_2$ in normal mode \\
4 - Commit to Goal$_d$ in agile mode & 4 - Commit to Goal$_3$ in normal mode \\
5 - Recharge & 5 - No commitment in agile mode \\
6 - Repair & 6 - Commit to Goal$_1$ in agile mode \\
NA & 7 - Commit to Goal$_1$ in agile mode \\
NA & 8 - Commit to Goal$_1$ in agile mode \\
NA & 9 - Recharge \\
NA & 10 - Repair \\
\hline
\end{tabular}
\end{table}
%%   table 3 ends %%%

%%   table 4 starts %%%
\begin{table}[htbp]
\centering
\caption{Action mapping from Sub-MDP to Global MDP}
\label{table_4}
\begin{tabular}{|c|c|c|c|c|c|c|}
\hline
\textbf{Sub MDP Action ($a_i$)} & 1 & 2 & 3 & 4 & 5 & 6 \\
\hline
\textbf{Global MDP action ($a$)} & 1 & $1+d$ & 5 & $5+d$ & 9 & 10 \\
\hline
\end{tabular}
\end{table}
%%   table 4 ends %%%

Table~\ref{table_3} summarizes the set of available actions for both the local (sub-MDP) and global MDP models, while Table~\ref{table_4} provides the explicit mapping rules that bridge the two domains. Here $d=1$ for goal 1, $d=2$ for goal 2 and $d=3$ for goal 3. This mapping ensures that the decisions taken by the decentralized or factored sub-MDPs can be seamlessly integrated and executed within the full mission context, maintaining consistency and interpretability across all levels of the control architecture.

To validate the accuracy of this mapping process, we select a representative test state $S = [1\ 1\ 0\ 1\ 0\ 2\ 1\ 1\ 0\ 2\ 1]$ and input it into both the combined (decomposed) policy and the benchmark global MDP policy. The resulting next-state values produced by each approach are then directly compared and visualized in Figure~\ref{state_mapping}. The figure demonstrates a perfect correspondence across all state variables, confirming that the mapped actions from the sub-MDPs successfully replicate the global policy's behavior for the given state.

\begin{figure}[htbp]
    \centering
    \includegraphics[width=0.6\textwidth]{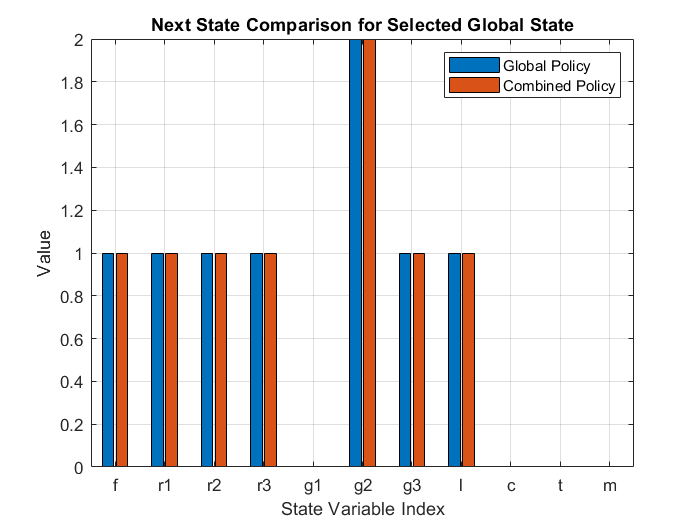}
    \caption{Next state comparison for a selected global state using the combined policy and the benchmark global MDP policy. Identical transitions indicate accurate action mapping and policy equivalence.}
    \label{state_mapping}
\end{figure}

\vspace{0.3cm}
This action mapping and state comparison procedure provides a rigorous benchmark for the fidelity of the proposed decomposition approach. By confirming that the combined local policies can reproduce the same system transitions as the global policy, we establish that our framework preserves decision quality and mission reliability, even while drastically reducing computational overhead. Moreover, this process helps to identify any discrepancies or edge cases, offering opportunities for further refinement of local policies or additional coordination mechanisms if needed. Overall, the ability to accurately map and compare actions and states across policy domains underscores the scalability and practical viability of the proposed sub-MDP decomposition framework for real-world UAV mission management.

\FloatBarrier
\section{Discussion and Conclusion}
This study addresses one of the critical challenges in UAV mission management: executing reliable, resource-aware, and fault-tolerant missions in dynamic environments, while overcoming the computational burden of large-scale MDPs. Unlike prior work that treats path planning, fault handling, and energy constraints in isolation, our contribution lies in unifying these objectives within a decision-theoretic MDP framework.

We propose a two-stage solution: first, a factor-based decomposition algorithm partitions the global MDP into smaller goal-specific sub-MDPs; second, a priority-based recombination algorithm solves each sub-MDP independently and integrates the results using a meta-policy for conflict resolution. This approach significantly reduces computation time and memory usage, enabling real-time decision-making in previously intractable scenarios.

Importantly, this study bridges the gap between AI decision theory and engineering implementation by providing a decomposition-based method that maintains global optimality while achieving near-linear scalability. Such efficiency gains have implications beyond UAVs — potentially extending to other high-dimensional AI decision frameworks such as POMDPs and multi-agent reinforcement learning.

A major contribution of this work is the theoretical foundation presented in Section~\ref{theorm}, where we formally prove that, under mild probabilistic independence assumptions (e.g., shared fault and threat transition models), the optimal global MDP policy can be exactly reconstructed by combining the optimal policies of sub-MDPs. This result not only validates our decomposition strategy but also strengthens its applicability in high-stakes UAV missions.

Simulation results confirm that the proposed decomposition leads to more than 99.9\% policy agreement with the global MDP, while achieving up to three orders of magnitude faster convergence. Further, the mapping of action and state trajectories confirms that decomposed policies yield identical mission behavior as the global solution.

\FloatBarrier
\subsection*{Future Work}
Building on this foundation, several directions remain open for future research. These include:
\begin{itemize}
    \item \textbf{Multi-agent decomposition:} Extending the framework to multi-UAV coordination using decentralized or hierarchical policies.
    \item \textbf{Online learning:} Incorporating reinforcement learning to adapt cost parameters or transition models in real time.
    \item \textbf{Adversarial resilience:} Introducing robust or risk-sensitive formulations to better handle adversarial or evolving threats.
    \item \textbf{Hardware integration:} Validating the approach through hardware-in-the-loop simulation or field deployment with physical UAV platforms.
\end{itemize}

In summary, this study establishes that MDP decomposition, supported by theoretical guarantees and priority-based recombination, is a scalable and practical solution for resilient UAV mission management in uncertain environments. Its demonstrated efficiency and near-linear scalability highlight the potential of decomposition methods for broader AI decision-making challenges, providing a bridge between theory and real-world engineering applications.

%% Loading bibliography style file
\bibliographystyle{model1-num-names}
\bibliography{Reference}

% Loading bibliography database
% \nocite{*} % to test all bib entrys
% \bibliographystyle{IEEEtran}
% \bibliography{IEEEabrv,Main}

%\vskip3pt

\bio{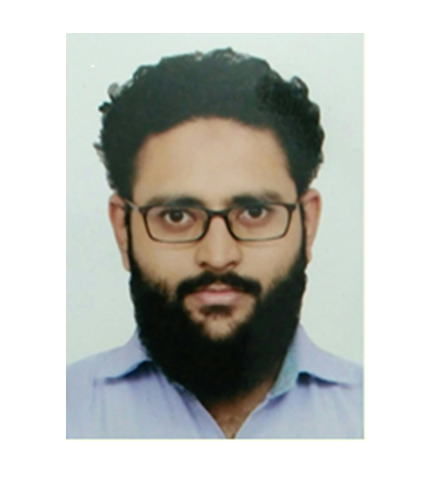}
Md Muzakkir Quamar is currently pursuing his PhD in controls and Instrumentation Engineering Department at KFUPM. He obtained his MS degree in  controls and Instrumentation Engineering from KFUPM in 2022. He is the recipient of full-time scholarship for MS and PhD at KFUPM. His research Interest include nonlinear system, multi-agent system, optimal control, Fault Detection and Diagnosis, Markov decision Process, Machine learning.
\endbio

\bio{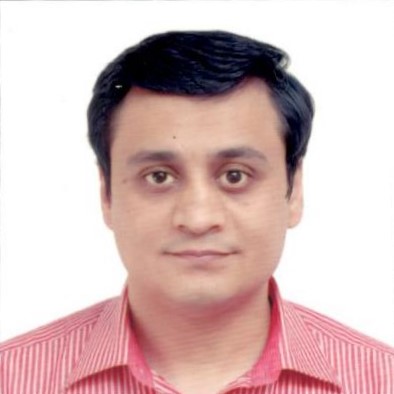}
Dr Ali received his B.Sc. in Electrical Engineering from University of Engineering and Technology, Taxila Pakistan. He obtained his  M.Sc. degrees in Electrical Engineering and Aerospace Engineering  and PhD degree in Aerospace Engineeringfrom the University of Michigan, Ann Arbor, USA. He is the recipient of Fulbright scholarship for an MS leading to a PhD.
He is currently working as an assistant professor in the Control and Instrumentation Engineering Department at KFUPM, KSA. He is an affiliate of the Interdisciplinary Research Center (IRC) for Intelligent Manufacturing and Robotics at KFUPM. He is also a guest affiliate of the IRC for Aviation and Space Exploration. He is currently working on fault-tolerant control and decision-making for industrial robots. His research interests also include approximate dynamic programming, nonlinear control, and State estimation.
\endbio

\bio{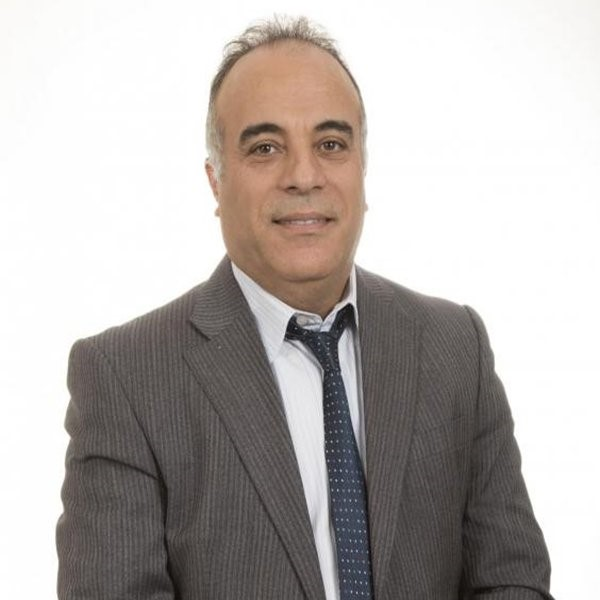}
Dr Sami Elferik received the B.Sc. degree in electrical engineering from Laval University, Canada, the M.S. degree in control and automation from the Department of Electrical and Computer Engineering, École Polytechnique, and the Ph.D. degree in control and automation from the University of Montreal, Montreal, Canada. His Ph.D. work was on flexible manufacturing systems modeling
and control and was co-supervised by mechanical
engineering. His master’s degree was in the
identification and optimal control of a stochastic electrical load. After
completing the Ph.D. studies and postdoctoral positions in the analysis and
control of discrete event systems, he was with Pratt and Whitney Canada,
as a Senior Staff Control Analyst with the Research and Development
Center of Systems, Controls, and Accessories. He is currently a Professor
with the Control and Instrumentation Engineering Department and the
Director of the IRC for Smart Mobility and
Logistics, KFUPM, KSA.
His research contributions are in control of the autonomous single-domain
and multi-domain multi-agent systems, unmanned systems UxV, biological
models of a fleet of unmanned aerial vehicles, process control and control
loop performance-monitoring, control of systems with delays, modeling,
control of stochastic systems, analysis of network stability, condition
monitoring, and condition-based maintenance. His research interests include
sensing, monitoring, and control with strong multidisciplinary research and
applications.
\endbio

\end{document}